\newtheorem{proposition}{Proposition}
\newtheorem{lemma}{Lemma}
\newtheorem{assumption}{Assumption}
\newcommand*{\eg}{\emph{e.g.}{}}
\newcommand*{\ie}{\emph{i.e.}{}}
\renewcommand{\d}{\mathrm{d}}
\DeclareMathOperator*{\argmax}{argmax}
\title{\bf Recurrent Neural Goodness-of-Fit Test\\ for Time Series}
\date{\vspace{-5ex}}
\author[1]{Aoran Zhang} 
\author[1]{Wenbin Zhou}
\author[2]{Liyan Xie} 
\author[1]{Shixiang Zhu}
\affil[1]{Carnegie Mellon University}
\affil[2]{University of Minnesota}
\begin{document}
  \maketitle

\begin{abstract}
Time series data are crucial across diverse domains such as finance and healthcare, where accurate forecasting and decision-making rely on advanced modeling techniques. While generative models have shown great promise in capturing the intricate dynamics inherent in time series, evaluating their performance remains a major challenge. Traditional evaluation metrics fall short due to the temporal dependencies and potential high dimensionality of the features. In this paper, we propose the REcurrent NeurAL (RENAL) Goodness-of-Fit test, a novel and statistically rigorous framework for evaluating generative time series models. By leveraging recurrent neural networks, we transform the time series into conditionally independent data pairs, enabling the application of a chi-square-based goodness-of-fit test to the temporal dependencies within the data. This approach offers a robust, theoretically grounded solution for assessing the quality of generative models, particularly in settings with limited time sequences. We demonstrate the efficacy of our method across both synthetic and real-world datasets, outperforming existing methods in terms of reliability and accuracy. Our method fills a critical gap in the evaluation of time series generative models, offering a tool that is both practical and adaptable to high-stakes applications.
\end{abstract}

\section{INTRODUCTION}

Time series data are pervasive across various domains, playing a critical role in applications requiring temporal insights, such as forecasting stock prices, optimizing trading strategies in finance, healthcare monitoring, and managing power generation and demand in energy systems \citep{han2019review, zhu2021quantifying, wu2024counterfactual, zhang2024self}. 
As the complexity and volume of time series data continue to rise, the demand for sophisticated modeling techniques grows in parallel. 
In this context, generative time series models have emerged as powerful tools, enabling probabilistic time series prediction \citep{li2022generative, liu2023flow, albergo2023building}, the simulation of future scenarios \citep{dong2023conditional, sattarov2023findiff}, and the generation of synthetic data that captures complex temporal dependencies \citep{yoon2019time, zhu2021imitation, zhu2022neural, suh2023autodiff, suh2024timeautodiff}. 

\begin{figure}[!t]
    \centering
    \includegraphics[width=\linewidth]{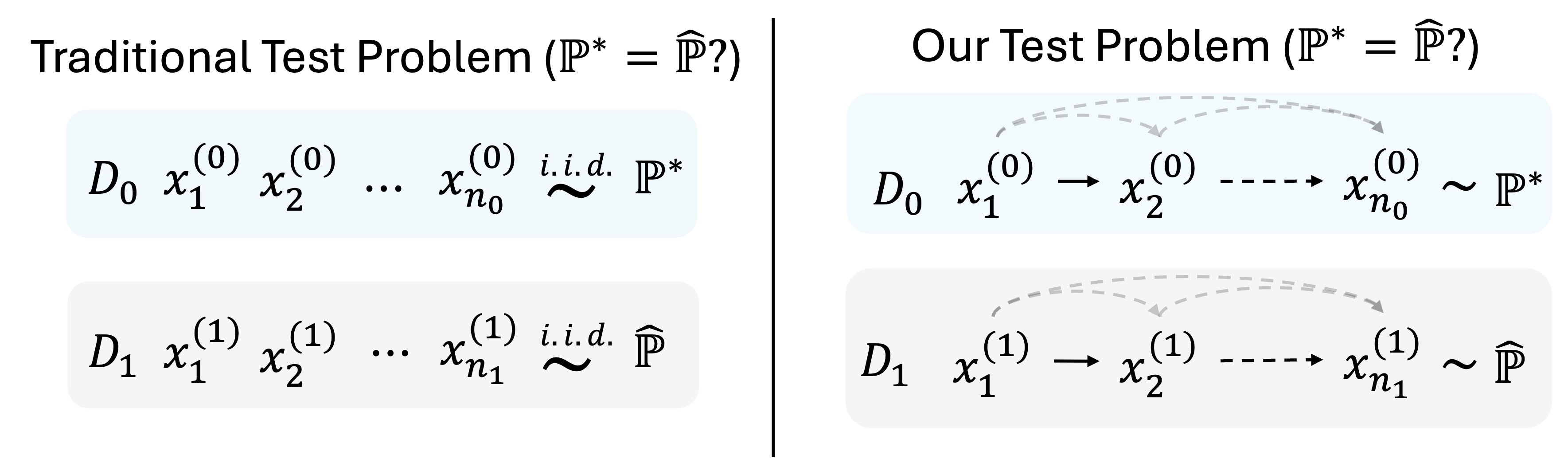}
    \caption{An illustration of our problem setup. In traditional test problems, both the real data ($D_0$) and the model-generated data ($D_1$) are assumed to be \textit{i.i.d.}, following the underlying distributions $\mathbb P^\star$ and $\widehat{\mathbb P}$, respectively. However, in our test problem, both $D_0$ and $D_1$ exhibit general temporal dependencies. 
    }
    \label{fig:our-task}
    \vspace{-.2in}
\end{figure}

However, despite their growing popularity, evaluating the performance of generative time series models remains a significant challenge.
In image and text domains, evaluation often relies on heuristic metrics, such as Fr\'{e}chet Inception Distance (FID) for images \citep{NIPS2017_8a1d6947} and BLEU score for text \citep{papineni2002bleu}, or even subjective visual inspection. 
These methods, however, are not readily applicable for time series data due to the intrinsic temporal dependencies between observations. As a result, the absence of established and rigorous evaluation methods for generative time series models poses a significant barrier to their further adoption, particularly in high-stakes domains where reliability is paramount. Current metrics either fail to account for the dynamic, evolving nature of time series or are built on assumptions that often do not hold in real-world settings. This underscores the need for robust, statistically sound evaluation techniques tailored to time series models.

Goodness-of-fit (GOF) tests have long been a cornerstone for evaluating the accuracy of statistical models \citep{d2017goodness}.
However, applying these tests to modern time series data presents significant challenges:
($i$) \emph{Complex temporal dependencies}: 
The inherent temporal structure of time series data often violates the strict stationarity assumption that underpins many traditional tests \citep{chen2003empirical}. 
In practice, we often only have access to a small number of sequences -- or even a single time series -- which does not provide enough samples to observe stationarity. 
A naive attempt to address this issue is to partition the time series into smaller segments, but this creates dependent subsequences, which restricts the effectiveness of standard tests that require a larger set of independent samples \cite{baum2023kernel}.
($ii$) \emph{High dimensionality}: 
Contemporary time series datasets often exhibit high dimensionality, complicating both estimation and hypothesis testing, which can result in diminished statistical power. 
These challenges underscore the necessity for new approaches that account for the inherent temporal structure and can operate effectively with a limited number of time sequences while providing statistically sound evaluations of model performance.

In this paper, we present a novel REcurrent NeurAL (RENAL) Goodness-of-Fit test tailored for general time series data, including both regularly and irregularly sampled time series. Our approach overcomes the challenges posed by temporal dependencies by transforming time series into conditionally independent data pairs using recurrent neural architectures such as Long Short-Term Memory (LSTM) and Transformers \citep{graves2012long, vaswani2017attention}.
We begin by leveraging recurrent networks to encode the time series into low-dimensional history embeddings, preserving temporal structure while mapping the original distribution into a low-dimensional Markov Chain. This enables us to decompose dependent observations into independent embedding pairs. 
We then propose a novel chi-square test to compare the empirical transition probabilities of history embeddings from real data and the generative model of interest. 
This method provides a theoretically grounded and practical tool for evaluating complex, potentially high-dimensional time series models. We demonstrate its strong empirical performance across various types of time series data, including both synthetic and real-world datasets, spanning traditional time series and discrete event sequences.

Our key contributions are summarized as follows.
\begin{enumerate}\setlength\itemsep{0em}
    \item We introduce the first goodness-of-fit test for general time series data, by transforming the problem into testing differences in transition probabilities of learned history embeddings. The resulting test offers an effective tool for evaluating the quality of generative time series models.
    \item We present the asymptotic distribution of test statistics under the null hypothesis, providing guarantees on the accuracies of the proposed test.
    \item We empirically validate our method, demonstrating its superior performance over state-of-the-art methods across both synthetic and real-world datasets, including both regularly and irregularly sampled time series.
\end{enumerate} 

\vspace{-0.1in}
\paragraph{Related Work} 

The evaluation of time-series generative models remains challenging due to the lack of a comprehensive and widely accepted framework.
Some measures have been proposed, including specialized metrics such as Fr\'{e}chet Inception Distance (FID) for images \citep{NIPS2017_8a1d6947} and BLEU score for text \citep{papineni2002bleu}, and more general distributional measures like Kullback-Leibier (KL) divergence \citep{moreno2003kullback, chan2005probabilistic}, Wasserstein distance \citep{xiao2017wasserstein}, and Maximum Mean Discrepancy (MMD) \citep{tpprl} have been adopted.
However, these measures are not inherently designed for goodness-of-fit (GOF) of time series data \citep{d2017goodness}.

Most GOF tests for time series are parametric. One common approach is to conduct residual-based tests that measure the discrepancy between residual distributions or assess residuals' autocorrelation \citep{ljung1978measure, box1970distribution, escanciano2006goodness, gallagher2015weighted}. Another approach is to conduct likelihood ratio-based tests to compare nested models  \citep{chernoff1954distribution, akaike1974new, burnham2004multimodel, chen2003empirical}. Stein-based methods assess model fit via Stein discrepancy, with applications in point processes \citep{chwialkowski2016kernel,yang2019stein}. These parametric methods typically impose strict assumptions about data distributions and are unsuitable for black-box generative models with unknown structures. 

In contrast, non-parametric GOF tests offer more flexibility without strict distributional assumptions. A popular approach measures the distance between kernel embeddings of distributions in a reproducing kernel Hilbert space using MMD \citep{gretton2012kernel, gretton2009fast,jitkrittum2016interpretable,chwialkowski2015fast, schrab2023mmd} or Fisher discriminant analysis \citep{eric2007testing}. The kernel can be enhanced with deep neural networks capture more cmoplex structures \citep{liu2020learning, kirchler2020two, zhu2023sequential}. Other methods use scores like  Hyv\"arinen scores \citep{wu2022score} or the Generalized Score \citep{wei2021goodness} to measure the discrepancies. 
However, most of these non-parametric tests overlook temporal dependencies and assume the data is {\it i.i.d.} or strictly stationary. While a non-parametric test based on Kernel Stein Discrepancy (KSD) exists for sequential models, it focuses on dependencies for neighboring events only and is designed for discrete-time series \citep{baum2023kernel}. We aim to design a GOF test to tackle continuous-time time series with long-term dependencies. 

Our work offers several key advantages over prior approaches. Due to its parametric specification, it is highly scalable compared to non-parametric methods when applied to large-scale data. Meanwhile, it can achieve high performance in complex and high-dimensional settings (\textit{e.g.}, generative models) compared to other traditional parametric methods, which can be credited to the strong modeling power of the recurrent neural networks that it incorporates.

\begin{figure*}[!t]
    \centering
    \includegraphics[width=\linewidth]{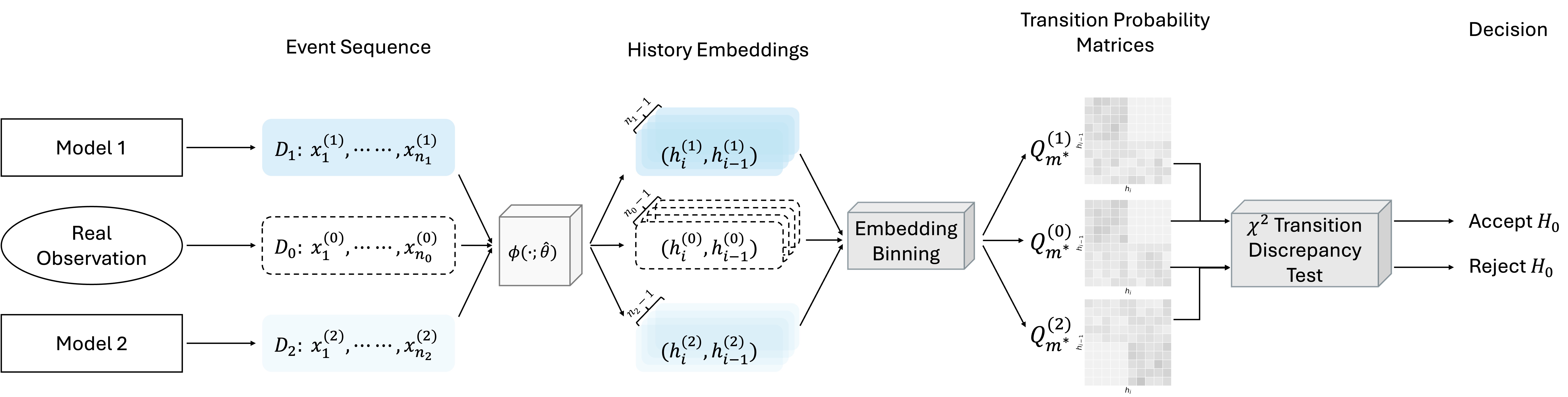}
    \caption{
    Architecture of the proposed framework. Real-world observations are compared to model-generated sequences, with darker blue indicating better fits. We first use a recurrent neural network $\phi$ to extract conditionally independent history embeddings. Then we construct their transition probability matrices using these embeddings and evaluate the fit with a chi-square discrepancy test.
    }
    \label{fig:architecture}
\end{figure*}

\section{RECURRENT NEURAL GOODNESS-OF-FIT}\label{sec:main}

In this section, we present a flexible framework for testing general time series data, termed REcurrent NeurAL (RENAL) Goodness-of-Fit. The overall architecture is summarized in Figure~\ref{fig:architecture}.

\subsection{Problem Setup}\label{sec:setup}

Consider two stationary time series datasets: $D_0 = \{ x_1^{(0)}, \ldots, x_{n_0}^{(0)} \}$, obtained from observations of a real-world process, and $D_1 = \{ x_1^{(1)}, \ldots, x_{n_1}^{(1)} \}$, synthetic data sequences generated by a learned time series model. 
Here, $n_0$ and $n_1$ are the lengths of the respective sequences, and each observation $x_i^{(k)}$ belongs to the sample space $\mathcal{X} \subseteq \mathbb{R}^d$, where $k \in \{0, 1\}$ indicates the dataset index and $d$ is the data dimension.
We assume that the data sequences are generated sequentially with strict temporal ordering. This means each observation $x_i^{(k)}$ depends solely on the preceding observations within its own sequence: $x_1^{(k)}, \ldots, x_{i-1}^{(k)}$.

Let $\mathbb{P}^*$ denote the true (but unknown) probability distribution governing the real-world time series $D_0$, and let $\widehat{\mathbb{P}}$ represent the probability distribution learned by the time series model, from which $D_1$ is sampled.
Our objective is to assess whether the learned time series model $\widehat{\mathbb{P}}$ accurately captures the underlying distribution of the real-world data. Formally, we aim to test the following hypotheses:
\begin{equation}
    H_0 : \mathbb{P}^* = \widehat{\mathbb{P}} \quad \text{versus} \quad H_1 : \mathbb{P}^* \neq \widehat{\mathbb{P}}.
    \label{eq:gof-objective}
\end{equation}
This constitutes a \emph{goodness-of-fit} (GOF) test for a time series model. In other words, we aim to test whether the real sequences $D_0$ are likely to be sampled from the tentative model $\widehat{\mathbb{P}}$. Note that $\widehat{\mathbb{P}}$ is also unknown except the sequences $D_1$ generated from it.
To assess the effectiveness of a GOF test, we use the Type-I and Type-II accuracy as the evaluation metric. We also choose the average of Type-I and Type-II accuracy as an overall metric\footnote{We may also use the weighted average of Type-I and Type-II accuracies, depending on which accuracy is prioritized.}.

Two key challenges arise in this task:
($i$) Each observation depends on all previous observations in its sequence, meaning the data are not independent. When we have a very limited number of sequences -- possibly even just one sequence for each distribution ($D_0$ and $D_1$) -- the lack of independence restricts the applicability of standard statistical tests.
($ii$) The sample space $\mathcal{X}$ may be high-dimensional, which complicates the estimation and comparison of the distributions $\mathbb{P}^*$ and $\widehat{\mathbb{P}}$.
The curse of dimensionality implies that the number of required samples increases exponentially with the dimensionality $d$, making distributional comparisons even more challenging.

\subsection{Recurrent Neural Representation}\label{sec:rec-neu-rep}

To address the challenges posed by temporal dependencies and high dimensionality in time series data, our main idea is to transform the original inter-dependent data sequences into a set of low-dimensional and conditionally independent data pairs. 
Rather than directly testing the raw time series data -- which is complicated by sequential dependencies -- we leverage recurrent neural architectures to create history embeddings that encapsulate temporal information.

We consider the time series modeled by Neural Ordinary Differential Equations (ODEs), a flexible and powerful framework capable of capturing continuous-time dynamics, making it well-suited for both regularly and irregularly sampled time series data  \citep{chen2018neural}.
Specifically, for a continuous-time series $\{x(t),t\geq 0\}$, we define a low-dimensional \emph{history embedding} $h(t) \in \mathcal{H} \subseteq \mathbb{R}^p$,
where $p \le d$ and $\mathcal{H}$ denotes the embedding space. This latent embedding captures the key information about the historical observations up to time $t$. The evolution of the embedding is governed by the differential equation:
\begin{equation}\label{eq:ode_h}
    \frac{\d h(t)}{\d t} = f(h(t),x(t)),
\end{equation}
where $f$ is the update function modeling the underlying dynamics over continuous time.
Here, $h(t)$ serves as a summarization of the time series $x(t)$ up to time $t$, effectively capturing temporal dependencies in a condensed form. 
Given the embedding, the observation $x(t)$ is a random variable with mean value $g(h(t))$, where $g: \mathcal{H} \mapsto \mathcal{X}$ is a decoding function.  

In practice, we work with $n$ discrete observations $\{t_i\}_{i=1}^n$ and adopt the Euler method to approximate the solution to the ODE model \eqref{eq:ode_h} \citep{chen2018neural}:
\begin{equation}\label{eq:h-discrete}
    h_{i+1}=h_i+f(h_i,x_i)\Delta t_i,
\end{equation}
where $h_i$ and $h_{i+1}$ are the history embeddings at times $t_i$ and $t_{i+1}$, respectively, and $\Delta t_i=t_{i+1}-t_i$ is time step between them. 

Inspired by the model \eqref{eq:h-discrete}, we parameterize the history embedding updates as 
\begin{equation}\label{eq:h-embedding}
 h_{i+1} = \phi(x_i, h_i; \theta),   
\end{equation}
using an embedding function $\phi(\cdot,\cdot;\theta): \mathcal{X} \times \mathcal{H} \mapsto \mathcal{H}$, which is modeled as a neural network with $\theta$ representing the network's weights.  The embedding function updates the history embedding $h_{i+1}$ based on the current observation $x_i$ and the previous history $h_{i}$. 

\noindent\emph{Learning}. The embedding network $\phi(\cdot,\cdot;\theta)$, together with the decoding network $g(\cdot;\varphi)$ can be learned by minimizing the one-step predictive loss. Specifically, we let $h_{i+1} = \phi(x_i, h_i; \theta)$ as in \eqref{eq:h-embedding} and $\widehat{x}_{i+1}=g(h_{i+1};\varphi)$, and the model parameters $\theta$ and $\varphi$ are jointly learned by minimizing a predictive loss, \eg, the mean square error $\sum_{i} \left\Vert\widehat{x}_i - x_i\right\Vert^2$. We denote the learned network as $\phi(\cdot,\cdot;\widehat{\theta})$ and $g(\cdot;\widehat{\varphi})$.

\subsection{Reformulation of GOF}

To facilitate the application of the GOF test in \eqref{eq:gof-objective}, we impose the following assumption on the learned embedding function.

\begin{assumption}[Consistency]
    \label{assump:sufficient}
    The learned embedding function $\phi(\cdot,\cdot;\widehat{\theta})$ is consistent, meaning it approximates the true underlying embedding function in \eqref{eq:h-discrete} as closely as necessary. Therefore, the learned embedding $h_{i+1}$ captures all relevant information from $x_i$ and $h_i$.
\end{assumption}
This assumption ensures that the history embeddings produced by our model are sufficient representations of the observed history, effectively summarizing all relevant past information required for accurate time series modeling.

Prior research has demonstrated the empirical validity of this assumption across various recurrent neural architectures, including RNNs, LSTMs, and Transformers \citep{mei2017neural, chen2018neural, zuo2020transformer, dong2023conditional}. Furthermore, even in the case when this assumption does not hold, our numerical experiments indicate that our method remains robust in performance. 
This resilience stems from the binary nature of the GOF test, where a clear distinction between null and alternative sequences is typically sufficient to trigger an alert.

Under Assumption~\ref{assump:sufficient}, we show in the following Lemma that the sequence of history embeddings $\{ h_i\}_{i=1}^n$ possesses the \emph{Markov} property and is \emph{homogeneous} over time.
\begin{lemma}
    \label{lemma:markov}
    The history embedding sequence $\{h_i\}_{i=1}^n$ is a homogeneous Markov chain, \ie, for any set $B \subset \mathcal{H}$:
    \[
        \mathbb{P}\{h_i \in B | h_{i-1}, h_{i-2}, \ldots, h_1\} = \mathbb{P}\{h_i \in B | h_{i-1}\},
    \]
    and 
    \[
        \mathbb{P}\{h_{i+1} \in B| h_{i}\} = \mathbb{P}\{h_{i} \in B | h_{i-1}\}.
    \]
    The proof is provided in Appendix \ref{app:proof-markov}.
\end{lemma}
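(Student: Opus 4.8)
The plan is to derive both properties directly from the recursive definition of the embedding in \eqref{eq:h-embedding} together with the Consistency Assumption~\ref{assump:sufficient} and the stationarity of the underlying time series. The starting observation is that $h_{i+1} = \phi(x_i, h_i; \widehat\theta)$ is a deterministic function of $(x_i, h_i)$, so all randomness in $h_{i+1}$ enters through $x_i$. Hence to establish the Markov property I would compute $\mathbb{P}\{h_{i+1} \in B \mid h_i, h_{i-1}, \ldots, h_1\}$ by conditioning on the past embeddings and noting that, by the sequential data-generating assumption in Section~\ref{sec:setup}, the conditional law of $x_i$ given the full observed history $x_1, \ldots, x_{i-1}$ is what drives $x_i$. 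The key reduction is Assumption~\ref{assump:sufficient}: since $h_i$ is a sufficient summary of $x_1,\ldots,x_{i-1}$ (it captures all relevant information from $x_{i-1}$ and $h_{i-1}$, and inductively from the whole past), the conditional distribution of $x_i$ given the past depends on the history only through $h_i$. Therefore $\mathbb{P}\{x_i \in \cdot \mid x_1,\ldots,x_{i-1}\} = \mathbb{P}\{x_i \in \cdot \mid h_i\}$, and pushing this forward through the deterministic map $\phi(\cdot, h_i; \widehat\theta)$ gives $\mathbb{P}\{h_{i+1}\in B \mid h_i,\ldots,h_1\} = \mathbb{P}\{h_{i+1}\in B\mid h_i\}$, which is the Markov property.

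For homogeneity, I would argue that the one-step transition kernel $K(h, B) := \mathbb{P}\{h_{i+1} \in B \mid h_i = h\}$ does not depend on $i$. This follows by combining two facts: first, the map $\phi(\cdot, \cdot; \widehat\theta)$ is a fixed time-invariant function (the network weights $\widehat\theta$ do not change with $i$); second, by the stationarity of the time series $\{x_i\}$ assumed in Section~\ref{sec:setup}, the conditional law of the "innovation" $x_i$ given $h_i = h$ is the same for every $i$. Since the transition kernel of $h$ is obtained by integrating the deterministic update $\phi(x, h; \widehat\theta)$ against this $i$-independent conditional law of $x_i$ given $h_i = h$, the kernel $K$ is itself $i$-independent. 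Applying $K$ at steps $i-1$ and $i$ then yields $\mathbb{P}\{h_{i+1}\in B\mid h_i\} = \mathbb{P}\{h_i \in B\mid h_{i-1}\}$ as required. (A small bookkeeping point: one should fix an initial embedding $h_1$, or a stationary initialization, so that "homogeneous Markov chain" is unambiguous; I would state the chain is homogeneous in the sense that all one-step transition kernels coincide, leaving the marginal law of $h_1$ as given.)

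The main obstacle is making the sufficiency step rigorous rather than heuristic: Assumption~\ref{assump:sufficient} is phrased informally ("captures all relevant information"), so I need to translate it into the precise statement that the family of $\sigma$-algebras satisfies the conditional-independence relation $x_i \perp (x_1,\ldots,x_{i-2}, h_1,\ldots,h_{i-1}) \mid h_i$, or equivalently that $h_i$ is a sufficient statistic for the conditional distribution of the future given the past. Once that is adopted as the formal content of the assumption, the argument is a routine induction on $i$: the base case uses $h_1$ fixed (or $h_2 = \phi(x_1,h_1;\widehat\theta)$ depending only on $x_1$), and the inductive step uses that $h_{i+1}$ is a function of $(x_i, h_i)$ together with the sufficiency relation to collapse the conditioning. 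I would present this induction compactly, emphasizing that no properties of $\phi$ beyond measurability and time-invariance are used, and that continuity/ODE structure is irrelevant to this particular lemma — it is purely a consequence of the recursive functional form plus sufficiency plus stationarity.
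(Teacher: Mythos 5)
Your proposal is correct and follows essentially the same route as the paper's proof: condition on the current observation $x_i$, use that $h_{i+1}=\phi(x_i,h_i;\widehat\theta)$ is deterministic given $(x_i,h_i)$ and that the conditional law of $x_i$ given the past depends only on $h_i$ (the formal content of Assumption~\ref{assump:sufficient} / the decoding structure) to collapse the conditioning, and then obtain homogeneity from the time-invariance of $\phi$ together with the stationarity of the conditional law of $x_i$ given $h_i$. Your added remarks on formalizing the sufficiency assumption as a conditional-independence statement and fixing the initial embedding are sensible refinements rather than a different argument.
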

Therefore, the distributional behavior of the learned history embeddings can be fully characterized by their one-step {\it transition} density function $Q:\mathcal H \times \mathcal H \mapsto \mathbb R_{\geq0}$. Specifically, for any subset $B \subseteq \mathcal{H}$ and for all $i$, we have
\[
\mathbb P\{h_{i+1} \in B |h_i = h\} = \int_{h' \in B} Q(h,h')dh',
\]
where $Q(h, \cdot)$ serves as the conditional probability density function of $h_{i+1}$ given $h_i = h$. 

\begin{figure}[!t]
    \centering
    \begin{subfigure}[t]{0.30\textwidth}
        \centering
        \includegraphics[width=\linewidth]{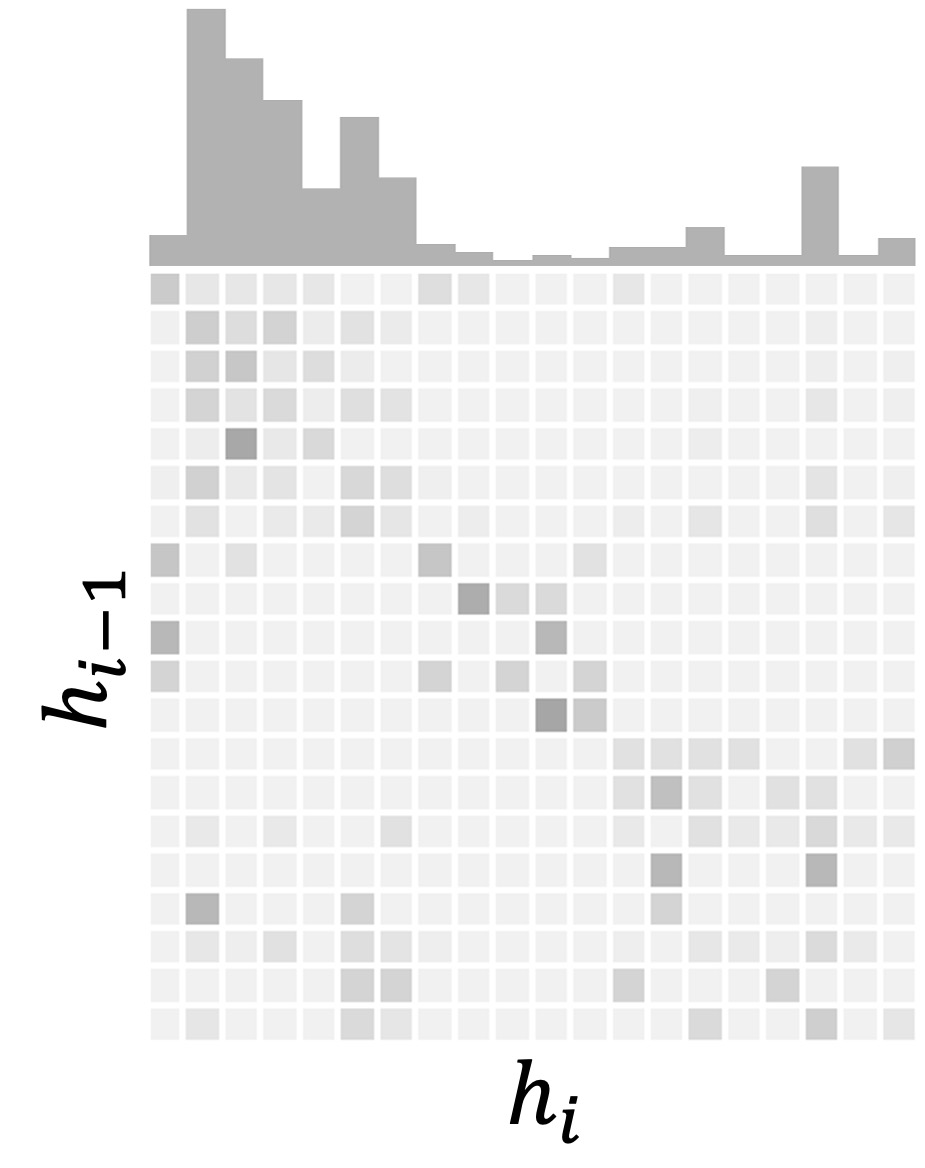}
        \caption{Real $Q^\star$}
        \label{fig:h1}
    \end{subfigure}
    \begin{subfigure}[t]{0.30\textwidth}
        \centering
        \includegraphics[width=\linewidth]{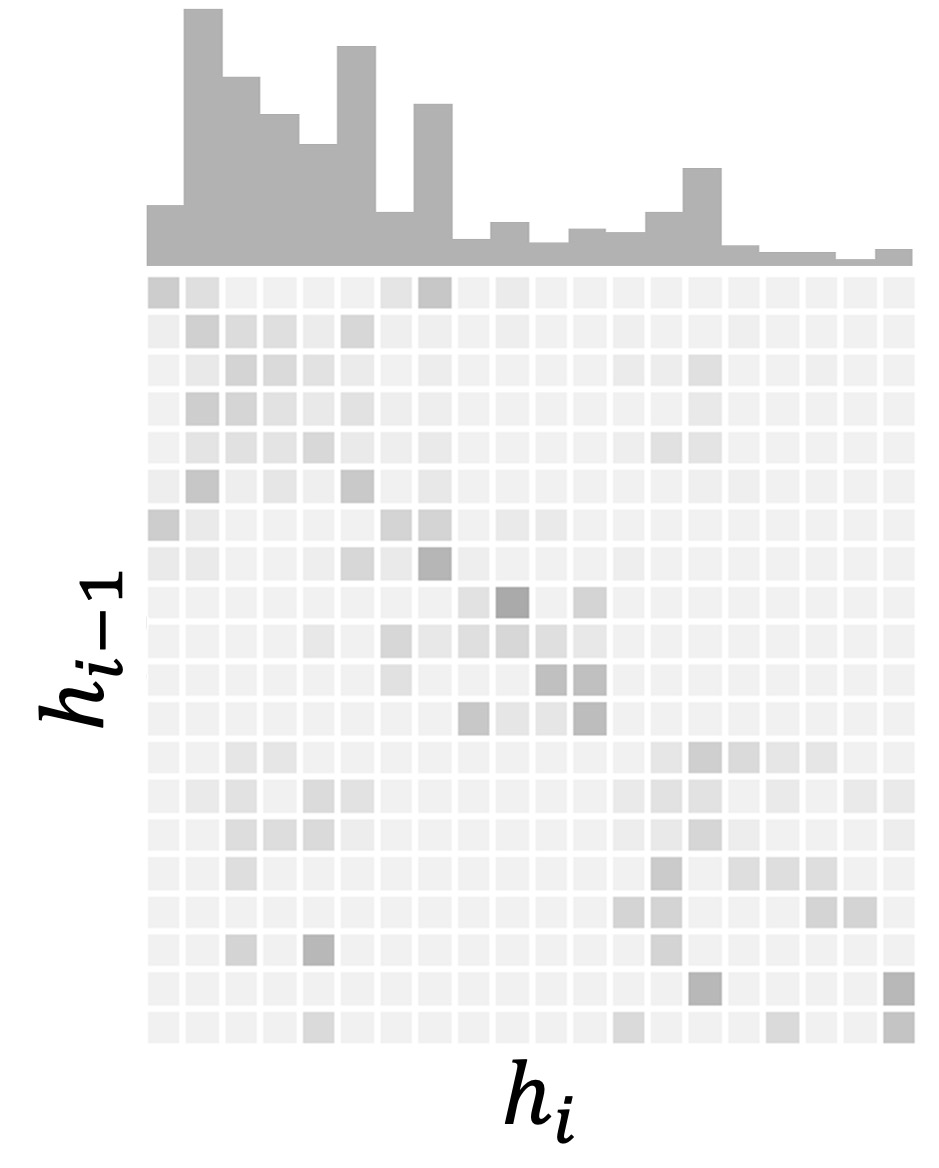}
        \caption{$\widehat{Q}_1$ ($0.99$)}
        \label{fig:h2}
    \end{subfigure}
    \begin{subfigure}[t]{0.30\textwidth}
        \centering
        \includegraphics[width=\linewidth]{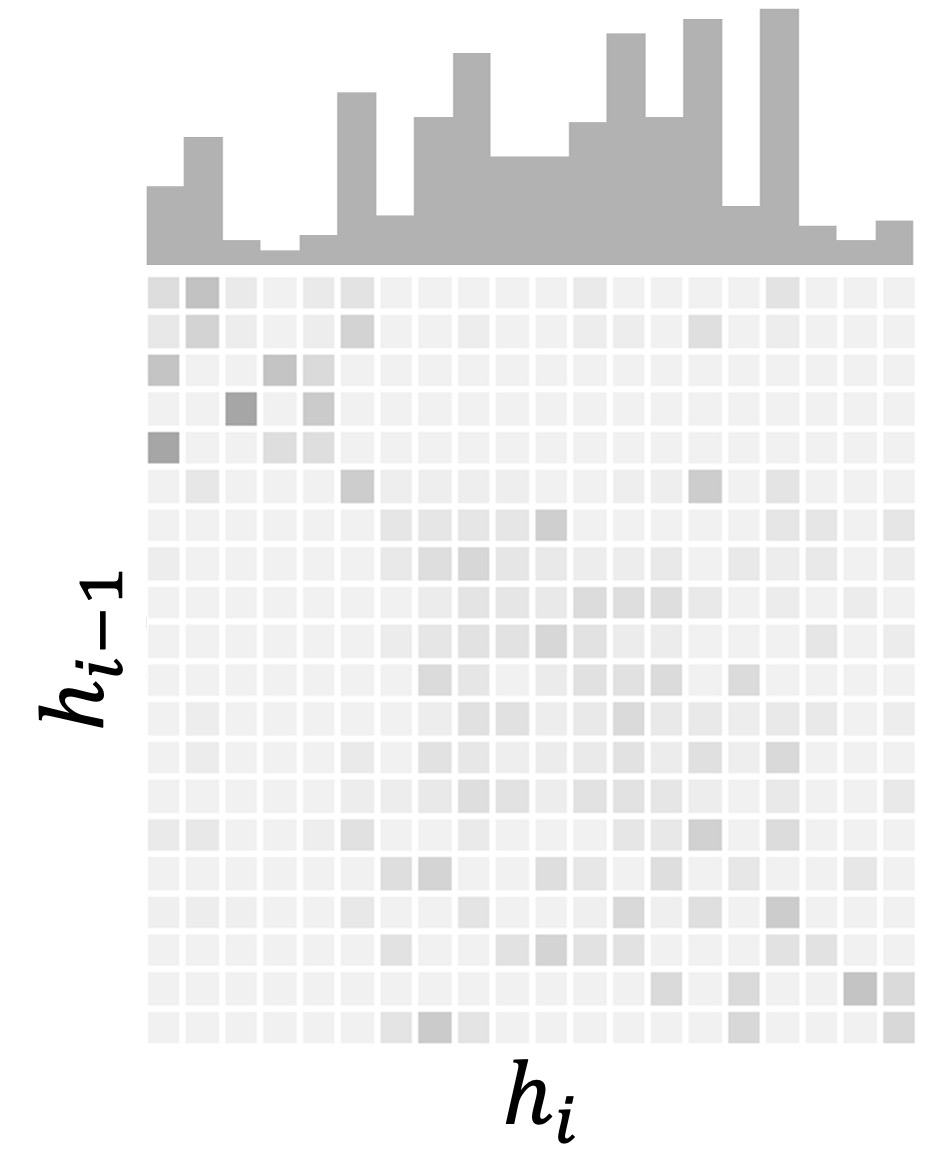}
        \caption{$\widehat{Q}_2$ ($\approx 0$)}
        \label{fig:h3}
    \end{subfigure}
    \caption{
    Transition probability matrices of history embeddings $Q$ from (a) real data, (b) data generated by Model $1$, and (c) data generated by Model $2$. Model $1$ exhibits a better fit compared to Model $2$, as evidenced by the closer resemblance between the histograms in (a) and (b). The number in the parentheses indicates the corresponding testing score. 
    }
    \label{fig:his-embs}
\end{figure}

This formulation allows us to pivot our original GOF test in \eqref{eq:gof-objective} to test the equality of the one-step transition probabilities for history embeddings. By leveraging the Markov property, we simplify the problem as shown in the following proposition.
\begin{proposition}
    \label{prop:markov-transition}
    The original GOF test in \eqref{eq:gof-objective} can be reformulated as:
    \begin{equation}
        H_0 : {Q}^\star = \widehat{Q} \quad \text{versus} \quad H_1 : {Q}^\star \neq \widehat{Q},
        \label{eq:markov-transition-test-objective}
    \end{equation}
    where $Q^\star$ denotes the true transition density function of the history embeddings derived from the real data, and $\widehat{Q}$ denotes the transition density function derived from the model-generated data.
\end{proposition}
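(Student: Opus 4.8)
The plan is to establish \eqref{eq:markov-transition-test-objective} as a genuine reformulation of \eqref{eq:gof-objective}, that is, to prove the biconditional $\mathbb{P}^\star = \widehat{\mathbb{P}} \iff Q^\star = \widehat{Q}$, working conditionally on the learned embedding function $\phi(\cdot,\cdot;\widehat{\theta})$ and decoder $g(\cdot;\widehat{\varphi})$, which are treated as fixed measurable maps applied identically to $D_0$ and $D_1$. The forward implication is the easy half: from a common initialization $h_1$, the recursion \eqref{eq:h-embedding} renders the embedding sequence $(h_1,\dots,h_n)$ a deterministic measurable image of $(x_1,\dots,x_{n-1})$, so $\mathbb{P}^\star = \widehat{\mathbb{P}}$ forces the pushforward laws of the embedding sequences to coincide; by Lemma~\ref{lemma:markov} both are homogeneous Markov chains, and restricting to a consecutive pair $(h_i,h_{i+1})$ shows their one-step transition densities agree, i.e.\ $Q^\star = \widehat{Q}$. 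This direction alone already certifies that a rejection of \eqref{eq:markov-transition-test-objective} is evidence against \eqref{eq:gof-objective}.

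For the reverse implication I would show that, under Assumption~\ref{assump:sufficient}, the time-series law is recoverable from $Q$, so that $Q^\star = \widehat{Q}$ forces $\mathbb{P}^\star = \widehat{\mathbb{P}}$. The steps are: (i) by the assumed stationarity of $D_0$ and $D_1$, the marginal law of each $h_i$ is the stationary distribution $\pi$ of the chain, which is pinned down by $Q$ (under a mild regularity condition ensuring $\pi$ is unique); (ii) Assumption~\ref{assump:sufficient} makes $h_{i+1}$ a sufficient summary of the history, so the one-step predictive law $x_{i+1}\mid x_1,\dots,x_i$ collapses to an emission kernel $x_{i+1}\mid h_{i+1}$ (whose mean the model ties to the decoder $g(\cdot;\widehat{\varphi})$); (iii) since $Q(h,\cdot)$ is exactly the pushforward of the emission kernel $x\mid h$ through $x\mapsto\phi(x,h;\widehat{\theta})$, the consistency of $\phi$ in Assumption~\ref{assump:sufficient} lets one invert this map and read the emission kernel off from $Q$, so $Q^\star=\widehat{Q}$ yields equal emission kernels; (iv) assembling the stationary marginal, the emission kernel, and the recursion \eqref{eq:h-embedding} reconstructs all finite-dimensional distributions of $\{x_i\}$, i.e.\ $\mathbb{P}$ itself. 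Since each ingredient is a functional of $Q$, the map $Q\mapsto\mathbb{P}$ is well defined, which gives $Q^\star=\widehat{Q}\Rightarrow\mathbb{P}^\star=\widehat{\mathbb{P}}$ and closes the equivalence.

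The main obstacle is step (iii): making precise the sense in which the embedding ``loses no distributional information,'' so that $Q$ and $\mathbb{P}$ stand in bijection. This is exactly where Assumption~\ref{assump:sufficient} carries the weight — it excludes the possibility that $\phi(\cdot,h;\widehat{\theta})$ collapses distinct emission behaviours into the same $Q(h,\cdot)$ (for instance through non-injectivity in its first argument) — and where one must state carefully that the decoder and noise model are common to $\mathbb{P}^\star$ and $\widehat{\mathbb{P}}$, which holds because a single pair $(\widehat{\theta},\widehat{\varphi})$ is fit once and then applied to both datasets. Secondary care is needed for the initial conditions and for the regularity (e.g.\ positive Harris recurrence) guaranteeing uniqueness of $\pi$; I would either add this as an explicit mild hypothesis or fold it into the stationarity already assumed of $D_0$ and $D_1$. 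A fully rigorous statement may therefore present \eqref{eq:markov-transition-test-objective} as an equivalence \emph{under} Assumption~\ref{assump:sufficient} plus such regularity, while noting that the practically essential direction (necessity) needs neither.
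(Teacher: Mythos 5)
Your forward direction (equal laws of the raw sequences push forward, through the fixed learned $\phi(\cdot,\cdot;\widehat{\theta})$, to equal laws of the embedding sequences, hence equal one-step transition densities) is sound, and it is in fact more than the paper writes down: the paper's proof of Proposition~\ref{prop:markov-transition} is a two-sentence appeal to Lemma~\ref{lemma:markov}, asserting that since $\{h_i\}$ is a homogeneous Markov chain, the distribution is ``fully characterized'' by the transition densities, so testing $Q^\star=\widehat{Q}$ is equivalent to testing $\mathbb{P}^\star=\widehat{\mathbb{P}}$. Your plan to prove a genuine biconditional is therefore more ambitious than the paper's own argument, and you correctly locate where the weight falls.

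The genuine gap is your step (iii). $Q(h,\cdot)$ is the pushforward of the emission kernel $f(\cdot\mid h)$ under $x\mapsto\phi(x,h;\widehat{\theta})$, and recovering $f$ from $Q$ requires that this pushforward map be injective on probability measures. Assumption~\ref{assump:sufficient} does not supply this: the construction is explicitly dimension-reducing ($p\le d$), and for $p<d$ a continuous $\phi(\cdot,h;\widehat{\theta}):\mathcal{X}\subseteq\mathbb{R}^d\to\mathcal{H}\subseteq\mathbb{R}^p$ cannot be injective, so distinct emission kernels can collapse to the same $Q(h,\cdot)$. Concretely, if some feature of $x_i$ has no influence on future dynamics --- so that ignoring it is perfectly compatible with ``$h_{i+1}$ captures all relevant information'' --- but its conditional law given $h$ differs between $\mathbb{P}^\star$ and $\widehat{\mathbb{P}}$, then $Q^\star=\widehat{Q}$ while $\mathbb{P}^\star\neq\widehat{\mathbb{P}}$. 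Relatedly, your step (iv) quietly assumes a common emission/noise model tied to $g(\cdot;\widehat{\varphi})$, but $g$ only pins down the conditional mean, and the emission kernel is exactly the object that may differ under $H_1$; it is not made common merely because the same $(\widehat{\theta},\widehat{\varphi})$ is applied to both datasets. So the reverse implication needs an additional identifiability hypothesis (injectivity of the induced map on emission laws, or a fixed known noise model around $g(h)$, plus your uniqueness-of-$\pi$ condition), which neither Assumption~\ref{assump:sufficient} nor stationarity provides. To be fair, the paper's proof asserts this direction without argument, so your write-up at least isolates the missing ingredient --- but as stated, the proposed inversion does not go through.
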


Figure~\ref{fig:his-embs} provides an illustrative comparison of transition probability matrices estimated from three different datasets. Panel (a) displays the matrix derived from the real data used for training, which is a sequence generated by a self-exciting point process (SE). Panels (b) and (c) show the transition probability matrices estimated from data generated by two fitted models, denoted as Model 1 and Model 2, respectively. We set Model 1 to be the same as SE, while Model 2 represents a self-correcting point process (SC), which differs significantly from SE.  
Observing these matrices, we can see that Model $1$ in panel (b) offers a better fit compared to Model $2$ in panel (c), as the pattern of its matrix more closely resembles that of the real data shown in panel (a). 

\subsection{Proposed Algorithm} \label{sec:algo}

To operationalize the reformulated GOF test in \eqref{eq:markov-transition-test-objective}, we proceed by partitioning the embedding space and calculating both transition probability matrices $Q^\star$ for the real data and $\widehat{Q}$ for the model-generated data.
We then perform a chi-square test using these matrices to assess the fit of the model. The complete testing procedure is outlined in Algorithm~\ref{alg:algo-box}.

We begin by learning the embedding function $\phi$ using the real-world observations $D_0$.
After training, we apply the learned embedding function to both datasets $D_0$ and $D_1$ and obtain their embedding sequences $\{h_i^{(0)}\}$ and $\{h_i^{(1)}\}$. 

\paragraph{Step 1: Embedding Binning}
We partition the embedding space $\mathcal{H}$ into $m$ bins of equal size, denoted as $\mathcal{H}_1,\ldots,\mathcal{H}_m$. 
Such discretization allows us to approximate the continuous transition density function $Q$ with a discrete transition matrix. For a given embedding sequence $\{h_i\}_{i=1}^n$, we assign each embedding to a bin, resulting in a sequence of bin indices $\{y_i\}_{i=1}^n$, where 
$
    y_i = \sum_{u = 1}^m u \cdot \mathbbm{1}\left\{ h_i \in \mathcal{H}_u \right\},
$
and $\mathbbm{1}$ denotes the indicator function, which equals $1$ if the condition is true and $0$ otherwise.
We then compute the transition count matrix $C_m = (c_{uv})_{u,v=1}^m$, where each entry represents the number of transitions from bin $u$ to bin $v$ in the embedding sequence, for any $1\leq u,n\leq m$,
\begin{equation}\label{eq:c_uv}
    c_{uv} = \sum_{i=1}^{n-1} \mathbbm{1}\left\{ h_{i} \in \mathcal{H}_u \text{ and } h_{i+1} \in \mathcal{H}_v \right\}.
\end{equation}
We then construct the empirical transition probability matrix $Q_m \coloneqq (q_{uv})_{u,v=1}^m$ by normalizing each row:
\begin{equation}\label{eq:q_uv}
    q_{uv} = \frac{c_{uv}}{\sum_{v'=1}^m c_{uv'}}. 
\end{equation}
This matrix represents the estimated probabilities of transitioning from bin $u$ to bin $v$.

In Lemma~\ref{lem:discret-markov}, we establish that this binning method effectively transforms the original continuous-state Markov chain into a discrete-state Markov chain. The proof is provided in Appendix \ref{app:discrete-markov-proof}.
\begin{lemma}\label{lem:discret-markov}
    For a given continuous-state homogeneous Markov chain $\{h_i\}_{i=1,\dots,n}$, the corresponding discretized sequence $\{y_i\}_{i=1,\dots,n}$ forms a homogeneous Markov chain in a discrete state space.
\end{lemma}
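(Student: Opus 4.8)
The plan is to show that the discretized sequence $\{y_i\}$ inherits the Markov property from $\{h_i\}$ by exploiting the fact that each $y_i$ is a deterministic (measurable) function of $h_i$, namely $y_i = \psi(h_i)$ where $\psi(h) = \sum_{u=1}^m u \cdot \mathbbm{1}\{h \in \mathcal{H}_u\}$, and that the bins $\{\mathcal{H}_u\}$ form a measurable partition of $\mathcal{H}$. The key observation is that the event $\{y_i = v\}$ is exactly the event $\{h_i \in \mathcal{H}_v\}$, and conditioning on the past bin indices $y_{i-1}, \ldots, y_1$ is conditioning on a coarser $\sigma$-algebra than conditioning on $h_{i-1}, \ldots, h_1$.

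First I would write, using the law of total probability / tower property, the conditional probability $\mathbb{P}\{y_i = v \mid y_{i-1}, \ldots, y_1\}$ and rewrite it in terms of the embeddings: since $\{y_j = y_j\}_{j<i}$ corresponds to $\{h_j \in \mathcal{H}_{y_j}\}_{j<i}$, I would condition further on $h_{i-1}$ (integrating over its distribution restricted to $\mathcal{H}_{y_{i-1}}$) and invoke the Markov property of $\{h_i\}$ from Lemma~\ref{lemma:markov} to collapse the dependence on $h_{i-2}, \ldots, h_1$. This gives
\[
\mathbb{P}\{y_i = v \mid y_{i-1} = u, y_{i-2}, \ldots, y_1\} = \int_{\mathcal{H}_u} \mathbb{P}\{h_i \in \mathcal{H}_v \mid h_{i-1} = h\}\, d\mu_{i-1}(h \mid y_{i-1} = u, \ldots),
\]
and the point is that this integral depends on the past only through the conditional law of $h_{i-1}$ given the past bin indices. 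To finish the Markov part I need that this conditional law of $h_{i-1}$, given $y_{i-1} = u$ and the earlier indices, equals the conditional law of $h_{i-1}$ given only $y_{i-1} = u$ — which again follows from the Markov property of $\{h_i\}$ (the law of $h_{i-1}$ given $h_{i-2}$ does not see $h_{i-3},\ldots$, and bin indices are functions of those). Hence the expression depends only on $u = y_{i-1}$, establishing the Markov property for $\{y_i\}$.

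For homogeneity, I would use the second identity in Lemma~\ref{lemma:markov} (the transition law of $\{h_i\}$ is time-invariant) together with stationarity of the underlying process: the joint law of $(h_{i-1}, h_i)$ does not depend on $i$, so the conditional law of $h_{i-1}$ given $y_{i-1} = u$ is the same for all $i$, and the kernel $h \mapsto \mathbb{P}\{h_i \in \mathcal{H}_v \mid h_{i-1} = h\}$ is the same for all $i$; therefore the transition probability $\mathbb{P}\{y_i = v \mid y_{i-1} = u\}$ is independent of $i$, giving a well-defined transition matrix $(q_{uv})$.

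The main obstacle is the conditional-law step: a deterministic function of a Markov chain is \emph{not} in general a Markov chain (this is the classical "lumping" subtlety), so I must be careful that the argument genuinely goes through here. What makes it work in our setting is that we condition on the \emph{entire} embedding history when invoking Lemma~\ref{lemma:markov} before coarsening — i.e. the conditional distribution of $h_{i-1}$ given the past bins is a mixture, but the integrand $\mathbb{P}\{h_i \in \mathcal{H}_v \mid h_{i-1}=h\}$ is itself a function of $h_{i-1}$ alone, so after integrating against the conditional law of $h_{i-1}$ (which by the Markov property of $\{h_i\}$ depends on the past bins only through $h_{i-2}$, hence only through $y_{i-2}$ — and iterating, this still leaves residual dependence unless handled carefully). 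I would therefore be explicit that the cleanest route is: condition on $h_{i-1}$ directly, use that $\sigma(y_{i-1},\ldots,y_1) \subseteq \sigma(h_{i-1},\ldots,h_1)$, apply the tower property, and observe $\mathbb{P}\{y_i = v \mid h_{i-1},\ldots,h_1\} = \mathbb{P}\{h_i \in \mathcal{H}_v \mid h_{i-1}\} = \mathbb{P}\{y_i = v \mid h_{i-1}\}$, which is $\sigma(h_{i-1})$-measurable but \emph{not} $\sigma(y_{i-1})$-measurable — so strictly one should note that the lemma as stated requires either an additional hypothesis (e.g. the chain started in a bin-measurable initial law, or the stated Markov property is meant in the sense that the induced bin-process is Markov because we only ever observe through bins) or should be read as asserting the existence of a consistent transition matrix for the binned chain. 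I would flag this and give the argument under the natural reading that the binned process is treated as the object of inference, where homogeneity plus the tower property suffice.
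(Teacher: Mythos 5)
Your route is essentially the paper's: rewrite $\mathbb{P}(y_i=a_i\mid y_{i-1},\dots,y_1)$ as the integral of the kernel $\eta\mapsto\mathbb{P}(h_i\in\mathcal{H}_{a_i}\mid h_{i-1}=\eta)$ against the conditional law of $h_{i-1}$ given the bin history, collapse that conditional law so it depends only on $y_{i-1}$, and get homogeneity from time-invariance of the kernel plus stationarity of the marginals (the paper's homogeneity step likewise explicitly assumes $\{h_i\}$ stationary). The one place you refuse to push through is exactly the step the paper's own proof asserts without justification: in its chain of equalities it replaces
\[
f_{h_{i-1}}\bigl(\eta \mid h_{i-1}\in\mathcal{H}_{a_{i-1}},\,h_{i-2}\in\mathcal{H}_{a_{i-2}},\dots,h_1\in\mathcal{H}_{a_1}\bigr)
\quad\text{by}\quad
f_{h_{i-1}}\bigl(\eta \mid h_{i-1}\in\mathcal{H}_{a_{i-1}}\bigr),
\]
and this does not follow from the Markov property of $\{h_i\}$: knowing which bins the earlier embeddings occupied generally shifts where $h_{i-1}$ sits \emph{inside} its bin, which is precisely the lumping subtlety you name. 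A deterministic coarsening of a Markov chain is Markov only under an extra condition, e.g.\ strong lumpability ($\mathbb{P}(h_i\in\mathcal{H}_v\mid h_{i-1}=\eta)$ constant in $\eta$ over each bin $\mathcal{H}_u$) or weak lumpability with respect to the chain's (stationary) initial law; the paper's proof implicitly assumes such a condition rather than deriving it. So your diagnosis is the mathematically correct one: your argument is faithful to the paper's up to that collapse, your flagged reading (either add a lumpability-type hypothesis or treat the binned transition matrix itself as the object being tested) is what is actually needed for the lemma as stated, and the paper's proof contains the same gap rather than a resolution of it.
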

We denote the empirical transition count and probability matrices obtained from $D_0$ as $C_m^{(0)}$ and $Q_m^\star$, following \eqref{eq:c_uv}-\eqref{eq:q_uv} , and the corresponding matrices from $D_1$ as $C_m^{(1)}$ and $\widehat{Q}_m$. By comparing the transition probability matrices $Q_m^\star$ and $\widehat{Q}_m$, we can perform the chi-square test to determine if there is a significant difference between the two distributions \citep{billingsley1961statistical}.

\noindent\emph{Optimal Bin Number Selection}.
Selecting an appropriate number of bins $m$ is crucial for effectively capturing the distributional differences between the two data sequences. An inadequate choice of $m$ can either oversimplify the data (if $m$ is too small) or introduce noise and sparsity issues (if $m$ is too large), adversely affecting the power of the statistical test.

To determine the optimal number of bins $m$, we formulate the following optimization problem, inspired by \cite{zhu1998second}:
\[
    \max_{m \ge 1}~\left \{ \lVert Q_m^{\star} - \widehat{Q}_m \rVert_F + \lambda\left( S(Q_m^{\star}) +  S(\widehat{Q}_m) \right )\right\},
\]
where $\lVert \cdot \rVert_F$ denotes the Frobenius norm that measures the discrepancy between two transition matrices, $\lambda$ is a user-defined parameter that control the trade-off between the discrepancy and the smoothness (or roughness) of the matrices, and $S(\cdot)$ is the smoothness measure of the transition matrix and is defined as:
\[
    S(Q_m) = -\sqrt{\sum_{u=2}^{m-1} \sum_{v=2}^{m-1} (\nabla^2 q_{uv})^2 }.
\]
Here $\nabla^2 q_{uv}$ denotes the second derivatives of $q_{uv}^{(k)}$, and in our binning scenario it is calculated as $\nabla^2 q_{uv} = q_{u+1,v} + q_{u-1,v} + q_{u,v+1} + q_{u,v-1} - 4q_{uv}$.

The rationale behind this optimization is to select the optimal number of bins $m$ by balancing two objectives: maximizing the discrepancy between the transition matrices of the real and generated data (to enhance the test's sensitivity to differences), and promoting smoothness in these matrices (to prevent noise and sparsity), thereby achieving a trade-off that leads to strong testing power. By optimizing this balance, we ensure the transition matrices capture meaningful differences without being affected by overfitting or random fluctuations.
We validate this procedure empirically and find that it enables the transition matrices to capture enough information about the differences in the trajectories, ultimately leading to strong testing power. The details of this procedure are provided in Appendix~\ref{app:smoothing}.

\paragraph{Step 2: $\chi^2$ Transition Discrepancy Test}
To assess the differences between the transition probability matrices $Q_m^\star$ and $\widehat{Q}_m$, we adopt the chi-square statistics, denoted by $W_m$.
The chi-square test is well-suited for measuring discrepancies between categorical distributions and, in our context, directly quantifies the differences between the transition matrices. We also establish its asymptotic distribution under temporal regimes following \cite{billingsley1961statistical} in Proposition~\ref{prop:chi},
with the proof provided in Appendix~\ref{app:chi-proof}. 

\begin{proposition}

\label{prop:chi} 
Given the transition count matrices $C_m^{(k)}$ and the corresponding $Q_m^\star=(q_{uv}^{\star})$ and $\widehat{Q}_m=(\widehat{q}_{uv})$ from \eqref{eq:q_uv}, and total event counts $n_{k}$ for $k \in \{0, 1\}$,
a chi-square test statistics $W_m$ is given by
\begin{equation}
  W_m = \sum_{u = 1}^m \sum_{v=1}^m \dfrac{c^{(0)}_{u}c^{(1)}_{u}}{c^{(0)}_{uv}+c^{(1)}_{uv}}\left( q_{uv}^{\star} - \widehat{q}_{uv}\right)^{2}
\label{eq:chi-statistic}
\end{equation}
where $c_u^{(k)} = \sum_{v=1}^m c_{uv}^{(k)}$ is the the total count of transitions from state $u$.
Under the null hypothesis $H_0 :  {Q}^\star = \widehat{Q}$, as $n_0, n_1 \to \infty$, the test statistics $W_m$ asymptotically follows a chi-square distribution with $m (m-1)$ degrees of freedom: 
\[
    W_m \sim \chi^2_{m(m-1)}.
\]
\end{proposition}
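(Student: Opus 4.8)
The plan is to recognize the statement as the two-sample (homogeneity) analogue of the classical asymptotic $\chi^2$ theory for finite-state Markov chains, and to reduce it to a joint central limit theorem for transition counts together with an algebraic identification of $W_m$ as a quadratic form. Throughout, I would hold the learned embedding map $\phi(\cdot,\cdot;\widehat\theta)$ fixed, so that by Lemma~\ref{lem:discret-markov} the binned sequences $\{y_i^{(0)}\}$ and $\{y_i^{(1)}\}$ are two \emph{independent} homogeneous Markov chains on $\{1,\dots,m\}$, stationary by the standing assumption of Section~\ref{sec:setup}, and --- under $H_0$ --- sharing the same one-step transition matrix $Q=(q_{uv})$. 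I would assume $Q$ is irreducible on its support (states of zero stationary mass can be deleted, lowering $m$), which yields a unique stationary law $\pi$ and the ergodic law of large numbers $c_u^{(k)}/n_k \to \pi_u$ a.s.\ as $n_k\to\infty$, for $k\in\{0,1\}$.

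The first substantive step is the joint CLT for the centered transition counts $D_{uv}^{(k)} := c_{uv}^{(k)} - c_u^{(k)} q_{uv}$. Writing $\xi_i^{uv} = \mathbbm{1}\{y_i^{(k)}=u,\,y_{i+1}^{(k)}=v\} - \mathbbm{1}\{y_i^{(k)}=u\}\,q_{uv}$, one checks that $\{\xi_i^{uv}\}_i$ is a martingale-difference sequence for the natural filtration, so each $D_{uv}^{(k)}$ is a martingale sum; the martingale CLT (equivalently, the results of \citet{billingsley1961statistical}) gives that $\{D_{uv}^{(k)}/\sqrt{n_k}\}_{u,v}$ converges jointly to a centered Gaussian whose covariance is \emph{block-diagonal across the current state $u$} --- because $\mathbbm{1}\{y_i=u\}\mathbbm{1}\{y_i=u'\}=0$ for $u\neq u'$ makes the single-time cross terms vanish, while the martingale property kills the unequal-time ones --- and within block $u$ equals $\pi_u\big(\mathrm{diag}(q_{u\cdot}) - q_{u\cdot}q_{u\cdot}^{\top}\big)$, the multinomial covariance weighted by $\pi_u$. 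Independence of $D_0$ and $D_1$ then makes the two families of block-Gaussians independent.

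Next I would rewrite $W_m$ as a quadratic form. Using $q_{uv}^\star - \widehat q_{uv} = D_{uv}^{(0)}/c_u^{(0)} - D_{uv}^{(1)}/c_u^{(1)}$ and the identity $\dfrac{c_u^{(0)}c_u^{(1)}}{c_{uv}^{(0)}+c_{uv}^{(1)}} = \dfrac{c_u^{(0)}c_u^{(1)}}{c_u^{(0)}+c_u^{(1)}}\cdot\dfrac{1}{\bar q_{uv}}$ with pooled estimate $\bar q_{uv} = (c_{uv}^{(0)}+c_{uv}^{(1)})/(c_u^{(0)}+c_u^{(1)})$, I would set $Z_{uv} := \sqrt{\tfrac{c_u^{(0)}c_u^{(1)}}{c_u^{(0)}+c_u^{(1)}}}\,(q_{uv}^\star - \widehat q_{uv})/\sqrt{\bar q_{uv}}$, so that $W_m = \sum_{u=1}^m\sum_{v=1}^m Z_{uv}^2$. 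Since $\bar q_{uv}\to q_{uv}$ and $c_u^{(k)}/n_k\to\pi_u$ by the LLN under $H_0$, Slutsky's theorem lets me replace these by their limits. A short computation --- adding the two independent within-block multinomial covariances, using $1/c_u^{(0)}+1/c_u^{(1)} = (c_u^{(0)}+c_u^{(1)})/(c_u^{(0)}c_u^{(1)})$ --- shows that, for each $u$, the vector $(Z_{u1},\dots,Z_{um})$ converges to $\mathcal N\big(0,\;I_m - \sqrt{q_{u\cdot}}\sqrt{q_{u\cdot}}^{\top}\big)$, where $\sqrt{q_{u\cdot}}$ is the unit vector with entries $\sqrt{q_{uv}}$ (indeed $\sum_v q_{uv}=1$). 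This covariance is an orthogonal projection of rank $m-1$, hence $\sum_v Z_{uv}^2 \Rightarrow \chi^2_{m-1}$; and since the limiting blocks over distinct $u$ are independent, $W_m \Rightarrow \sum_{u=1}^m \chi^2_{m-1} = \chi^2_{m(m-1)}$, which is the claim.

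I expect the main obstacle to be the joint CLT for the Markov transition counts with the correct \emph{cross-row-independent} covariance structure: this is the only place where the Markov (rather than i.i.d.) nature of the data genuinely enters, and it is exactly what produces the degree-of-freedom count $m(m-1)$ instead of something smaller. The remaining items are routine but worth stating: (i) justifying that $\phi(\cdot,\cdot;\widehat\theta)$ may be held fixed and that zero-mass states can be discarded so the chain is irreducible; (ii) accommodating the two (possibly different) rates at which $n_0,n_1\to\infty$ --- cleanest to assume $n_1/n_0$ bounded away from $0$ and $\infty$, or, as above, to normalize by $c_u^{(k)}$ directly, which sidesteps the issue; and (iii) the Slutsky replacements of $\bar q_{uv}$ and $c_u^{(k)}/n_k$, immediate from ergodicity.
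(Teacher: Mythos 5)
Your proposal is correct and arrives at the stated $\chi^2_{m(m-1)}$ limit, but it takes a noticeably different technical route from the paper's. The paper follows \citet{billingsley1961statistical} closely: it first proves a weak law of large numbers for the state counts (Lemma~\ref{lem:lem 3.2}), then establishes asymptotic normality of the normalized transition counts $\xi_{uv}=(c_{uv}-c_u q_{uv})/c_u^{1/2}$ by re-generating the chain from an independent array of multinomial draws and controlling the random visit count with Kolmogorov's inequality (Lemma~\ref{lem:thm 3.1}); it then invokes Pearson's classical chi-square theorem row by row, plugs in the pooled ``common estimate'' $(c^{(0)}_{uv}+c^{(1)}_{uv})/(c^{(0)}_u+c^{(1)}_u)$ for the unknown $q_{uv}$ (citing Billingsley and Darwin for the resulting degrees of freedom), and finishes with the algebraic simplification showing the two-sample statistic equals $W_m$. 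You instead get the joint CLT by treating the centered counts as martingale sums and then analyze $W_m$ directly as a quadratic form: your weight identity $c_u^{(0)}c_u^{(1)}/(c^{(0)}_{uv}+c^{(1)}_{uv})=\bigl[c_u^{(0)}c_u^{(1)}/(c_u^{(0)}+c_u^{(1)})\bigr]/\bar q_{uv}$ is exactly the paper's final algebra read in reverse, and your identification of each row's limiting covariance as the rank-$(m-1)$ projection $I_m-\sqrt{q_{u\cdot}}\sqrt{q_{u\cdot}}^{\top}$ yields the $m(m-1)$ degrees of freedom without appealing to Pearson's theorem or to the paper's somewhat informal pooled-estimate step, so your route treats the two-sample degrees-of-freedom question more transparently and handles the two growth rates $n_0,n_1$ cleanly by normalizing with $c_u^{(k)}$. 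What the paper's route buys is direct reliance on classical results; what yours buys is cleaner covariance bookkeeping (block-diagonality across rows from disjoint indicators, martingale structure killing unequal-time terms). In a full write-up you should state explicitly the irreducibility/ergodicity assumption needed for the conditional-variance convergence in the martingale CLT and for $c_u^{(k)}/n_k$ to have a positive limit---an assumption the paper also uses implicitly through stationarity.
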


Given the fitted chi-squared statistics $W_m$, we test the reformulated null hypothesis in \eqref{eq:markov-transition-test-objective} by setting the rejection region of significance level $\alpha$ as:
\begin{equation}\label{eq:reject-region}
R_m(\alpha) = \left\{ W_m \ge \chi^2_{m(m-1)}(\alpha) \right\},    
\end{equation}
where $\chi^2_{m(m-1)}(\alpha)$ denotes the $(1-\alpha)$-quantile of the chi-square distribution with $m(m-1)$ degrees of freedom, and $\alpha$ is the pre-specified significance level.
If the null hypothesis is rejected, we conclude that the sequences $D_1$ are not generated from the distribution of $D_0$; otherwise, we infer the opposite. 
As a direct consequence of Proposition~\ref{prop:chi}, 
the type-I error rate (the probability of incorrectly rejecting $H_0$ when it is true) associated with this rejection region is asymptotically bounded by $\alpha$.
This underscores the effectiveness of the proposed testing procedure in controlling false positives under the asymptotic regime.

\begin{algorithm}[!t]
\small
\caption{RENAL Goodness-of-Fit Test}\label{alg:algo-box}
\SetKwInOut{Input}{Input}\SetKwInOut{Output}{Output}
\Input{$D_0 = \{x_1^{(0)}, \dots, x_{n_0}^{(0)} \}$, $D_1 = \{x_1^{(1)}, \dots, x_{n_1}^{(1)}\}$, smoothing parameter $\lambda$, set of bin numbers $M$, significance level $\alpha$.}
\textbf{Initialization:} $h_0^{(0)}$, $h_0^{(1)}$, $m^* \leftarrow 2$, $\ell^\star \leftarrow -\infty$\;
    $\widehat{\theta} \leftarrow $
    Learn model $\phi(\cdot;\theta)$ using $D_0$\;
\For{$k \in \{0, 1\}$}{
        \For{$i = 1$ \KwTo $n_k$}{
            $h_i^{(k)} = \phi(h_{i-1}^{(k)}, x_i^{(k)}; \widehat{\theta})$\;
        }
    }
$m^\star \leftarrow \argmax_{m^\prime}\{\lVert Q_{m^\prime}^\star - \widehat{Q}_{m^\prime}\rVert + \lambda \left(S(Q_{m^\prime}^\star) + S(\widehat{Q}_{m^\prime})\right)\}$\;
where $Q_{m}^{\star}$ and $\widehat{Q}_m$ are computed using $C_m^{(0)}$ and $C_m^{(1)}$, 
according to \eqref{eq:c_uv} and~\eqref{eq:q_uv}\;
\vspace{1ex}
Compute test statistics $W_{m^*}$ using \eqref{eq:chi-statistic}\;
\SetAlgoNoEnd
\If{$W_{m^*} > \chi^2_{m^*(m^*-1)}(\alpha)$}{
\vspace{.5ex}
\Return Reject\;
} 
\Else{
\vspace{.5ex}
\Return Accept\;
}
\end{algorithm}
    
We highlight RENAL's strong computational scalability in large-scale data settings due to the following factors:
($i$) Several acceleration techniques can enhance RENAL’s scalability, such as parallel processing with attention mechanisms \citep{bahdanau2014neural} and leveraging pre-trained models \citep{franceschi2019unsupervised} to reduce the number of training iterations required for convergence.
($ii$) RENAL extracts a low-dimensional representation of temporal dependencies using a compact transition matrix, which minimizes the computational overhead of the binning optimization procedure, even with large training datasets.
($iii$) As a parametric GOF testing method, RENAL’s inference cost scales with the number of model parameters rather than the number of training samples, making it more efficient than nonparametric methods in large-scale applications.

\section{EXPERIMENTS}\label{sec:exp}

In this section, we validate our proposed goodness-of-fit test against competing baselines (Section~\ref{exp:baseline}) across a broad spectrum of sequential data paradigms. Synthetic and real data studies are shown in Section~\ref{exp:syn data} and Section~\ref{exp:real-data} respectively.

\subsection{Experiment Setup}

We first introduce the baselines, evaluation metrics, and choices of neural network models used in the experiments. More detailed baselines and specific configurations are provided in Appendix~\ref{exp:baseline} and~\ref{exp:nn-details}.

\vspace{-0.1in}
\paragraph{Baselines}
We consider a broad range of existing GOF tests including:
($i$) Empirical Likelihood (\texttt{EL}): a test using kernel smoothing to assess the empirical likelihood for an $\alpha$-mixing process \citep{chen2003empirical},
($ii$) Portmanteau test with Weighted McLeod–Li statistics (\texttt{PT-$Q_w$}): a portmanteau test using the trace of squared autocorrelation matrices for stable statistics \citep{gallagher2015weighted},
($iii$) Sieve-based Cramer-von Mises (\texttt{S-CvM}): a sieve-bootstrap based test to approximate $p$-values of the Cramer-von Mises test  \citep{escanciano2006goodness},
($iv$) Maximum Mean Discrepancy (\texttt{MMD}): a kernel-based test measuring MMD distance  \citep{gretton2012kernel},
($v$) Stein-based Bootstrap test (\texttt{Stein}): a test using a Stein operator value along with a wild bootstrap \citep{chwialkowski2016kernel},
($vi$) Kernel Stein Discrepancy (\texttt{KSD}): a kernel Stein test specifically designed for general point processes \citep{yang2019stein}.
We also consider other baselines using recurrent neural representations with different partitions with details available in Appendix~\ref{exp:baseline}. 

\vspace{-0.1in}
\paragraph{Evaluation Metrics}
\label{exp:eval metric}

We use standard Type-I and Type-II accuracies to assess the performance of the proposed method. Type-I accuracy refers to the probability of correctly accepting the null hypothesis when it is true, while Type-II accuracy reflects the probability of correctly rejecting the null when it is false. To compare overall performance across different scenarios, we also compute the ``Average'' accuracies in the Tables below, by simulating multiple data sequence pairs from both $H_0$ and $H_1$. The average accuracy is defined as the proportion of correct decisions, i.e., accepting $H_0$ when true and rejecting it when false.  

\vspace{-0.1in}
\paragraph{Model Configuration}  \label{exp:setup}
For our method, we use LSTM \citep{graves2012long} as the embedding function $\phi$ for regularly sampled time series data, and Neural Hawkes \citep{mei2017neural} for irregularly sampled time series data, such as temporal point processes.
The neural network configurations and discretization settings are described in Appendix \ref{exp:nn-details}. All baseline methods are applied following their original implementations, with set-up details available in Appendix~\ref{exp:baseline}.
The significance level for all chi-square tests was set at $\alpha = 0.05$, which is used to determine the rejection region as in \eqref{eq:reject-region}.

\begin{table*}[ht]
\centering
\caption{Testing accuracy on synthetic dataset with $\alpha=0.05$. ``--'' indicates the method is not applicable.
} 
\vspace{-0.1in}
\label{tab:syn_acc}
\resizebox{\linewidth}{!}{
\begin{tabular}{c c c c c c c c @{\hspace{0.5cm}} c c c c c @{\hspace{0.5cm}} c c c c c}
\Xhline{2pt}
\multicolumn{2}{c}{\textbf{Methods}} &\multicolumn{6}{c }{\textbf{Time Series}} &  \multicolumn{5}{c }{\textbf{TPP}} & \multicolumn{5}{c}{\textbf{STPP}} \\ \cline{3-8}\cline{9-13}\cline{14-18}
\multirow{2}{*}{\textbf{\quad}}&$P^\star$ 
& \multirow{2}{*}{Average} & ARMA$(2,1)$ & ARMA$(2,1)$ & ARMA$(2,2)$ & ARMA$(2,1)$ & ARMA$(2,2)$  
&\multirow{2}{*}{Average}& SE & SC & SE & SC  
&\multirow{2}{*}{Average} &  STD & GAU & STD & GAU  \\
&$\widehat{P}$ 
 &    & ARMA$(2,1)$ & ARMA$(2,2)$ & ARMA$(2,2)$ & GARCH$(1,1)$ & GARCH$(1,1)$
&   & SE & SC & SC & SE 
&   & STD & GAU & GAU & STD  \\ \hline
\multicolumn{2}{c}{\texttt{EL}} & $0.37$ & $0.08$ & $0.33$ & $0.23$ & $0.76$ & $0.64 $
& -- & -- & -- & -- & -- & -- & -- & -- & --\\
\multicolumn{2}{c}{\texttt{PT}-$Q_W$} & $0.52$ & $0.91$ & $0.15$ & $0.93$ & $0.95$ & $0.08 $
& -- & -- & -- & -- & -- & -- & -- & -- & --\\
\multicolumn{2}{c}{\texttt{S-CvM}} & $0.44 $& $0.98$ & $0.08$ & $0.97$ & $0.02 $& $0.04 $
& -- & -- & -- & -- & -- & -- & -- & -- & -- \\
\multicolumn{2}{c}{\texttt{Stein}} & -- & -- & -- & -- & -- &-
& $0.51$ & $0.32$ & $0.47$ & $0.51$ & $0.74  $
& -- & -- & -- & -- \\
\multicolumn{2}{c}{\texttt{KSD}} & -- & -- & -- & -- & -- &-
&$0.56$  & $0.66$ &$ 0.78$ & $0.72$ & $0.08$ 
& -- & -- & -- & -- \\ 
\multicolumn{2}{c}{\texttt{MMD}} & $0.62$ & $0.82 $& $0.15$ & $0.72$ & $0.75 $& $0.82$ 
 & $0.45$ & $0.53$ & $0.61$ &$ 0.31$ & $0.35$
 & $0.43$ & $0.65$ & $0.68$ & $0.10$ & $0.26$ \\ 
\multicolumn{2}{c}{\texttt{EWD-2}} & $0.55$ & $0.70$ & $0.10$ & $0.68$ & $0.70$ & $0.84 $
& $0.53$ & $0.80$ & $0.69$ & $0.25$ & $0.37$ 
& $0.44$ & $0.75$ & $0.80$ & $0.07$ & $0.13$ \\
\multicolumn{2}{c}{\texttt{EWD-4}} & $0.61$ & $0.45$ & $0.65$ & $0.37$ & $0.98$ & $0.87$
 & $0.53$ & $0.46$ & $0.33$ & $0.63 $& $0.68$
& $0.48$ & $0.13$ & $0.09$ & $0.93$ & $0.75$ \\
\multicolumn{2}{c}{\texttt{EWD-6}} & $0.66$ & $0.60$ & $0.52$ & $0.58$ & $0.95$ & $0.91 $
& $0.56$ & $0.54$ & $0.63$ & $0.54$ & $0.52$ 
& $0.38$ & $0.68$ & $0.63$ & $0.07$ & $0.13$  \\
\multicolumn{2}{c}{\texttt{EWD-8}} &$0.66 $ & $0.80$ & $0.15$ & $0.76$ & $0.94$& $0.84$ 
& $0.53$ & $0.65$ & $0.76$ & $0.38$ & $0.39$ 
& $0.48$ & $0.83$ & $0.88$ & $0.12$ & $0.17$ \\
\multicolumn{2}{c}{\texttt{EWD-10}} & $0.67$ & $0.95$ & $0.15$ & $0.85$ & $0.77$ & $0.65 $
 & $0.52$ & $0.80$ & $0.80$ & $0.22$ & $0.29$
 & $0.46$ & $0.94$ & $0.85$ & $0.05$ & $0.09$\\ 
\multicolumn{2}{c}{\texttt{Scott}} & $0.56$  & $0.97$ & $0.08$ & $0.47$ & $0.72$ & $0.58 $
&$ 0.43$ & $0.99$ & $1$ & $0.03$ & $0.01$ 
& $0.49$ & $0.98$ & $1$ & $0.005$ & $0.001$ \\ 
\multicolumn{2}{c}{\textbf{\texttt{RENAL}}} & \textbf{0.72} & $0.71$ & $0.60$ & $0.65$ & $0.92$ & $0.95 $
 & \textbf{0.61} & $0.64$ & $0.62$ & $0.58$ & $0.56$
& \textbf{0.60} & $0.70$ & $0.57$ & $0.67$ &$ 0.44$ \\ \Xhline{2pt}
\end{tabular}}
\end{table*}

\begin{table*}[ht]
\centering
\caption{Testing accuracy on real data with $\alpha=0.05$. 
For earthquake data, we consider two cases: TPP (time only) and STPP (spatio-temporal).
}
\label{tab:earthquake}
\vspace{-0.1in}
\resizebox{\linewidth}{!}{
\begin{tabular}{cc  c c c c c @{\hspace{0.5cm}} c c c c c @{\hspace{0.5cm}} c c c c c}
\Xhline{2pt}
\multicolumn{2}{c}{Model}  & \multicolumn{5}{c}{Weather Time Series} &  \multicolumn{5}{c}{Earthquake TPP} & \multicolumn{5}{c}{Earthquake STPP} \\ \cline{3-17}
\multicolumn{1}{c}{\quad\quad} & $P^\star$ & \multicolumn{1}{c}{\multirow{2}{*}{Average}} & PIT & SFO & SFO & PIT 
& \multicolumn{1}{c}{\multirow{2}{*}{Average}} & JP & NC & NC & JP 
&\multicolumn{1}{c}{\multirow{2}{*}{Average}} & JP & JP & NC & NC  \\
\multicolumn{1}{c}{\quad} & $\widehat{P}$ & \multicolumn{1}{c}{} 
& PIT & SFO & PIT & SFO &\multicolumn{1}{c}{}  & JP & NC & JP & NC 
 & \multicolumn{1}{c}{} & JP & NC & JP & NC \\\hline
\multicolumn{2}{c}{\texttt{MMD}} & $0.52$ & $1$ & $0.98$ & $0.02$ & $0.01 $
& $0.37$ & $0.24$ & $0.47$ & $0.42$ & $0.34$ 
& $0.51$ & $0.15$ & $0.05$  & $0.98$ & $0.97$\\
\multicolumn{2}{c}{\textbf{\texttt{RENAL}}} & \textbf{0.66} & $0.65$ & $0.67$ & $0.58$ & $0.75$
& \textbf{0.57} & $0.57$ & $0.63$ & $0.52$ & $0.54$
& \textbf{0.62} & $0.6$ & $0.37$ & $0.72$ & $0.67$\\\Xhline{2pt}
\end{tabular}}
\end{table*}

\subsection{Synthetic Studies}
\label{exp:syn data}

We investigate the performance of \texttt{RENAL} through synthetic experiments with predefined ground truths and fitted models. More detailed descriptions of the testing procedures are deferred to Appendix~\ref{exp:testing-procedure}. 

\vspace{-0.1in}
\paragraph{Synthetic Data} 
We consider three types of time series data in our synthetic study: ($i$) \emph{Time Series Data} consists of data generated with three models: two stationary ARMA(2,1) and ARMA(2,2) models, and a 
GARCH (1, 1) model, ($ii$) \emph{Temporal Point Processes (TPP) Data} consists of data generated from two TPP models: self-exciting point process (SE) and self-correcting (SC) point process, where the occurrence of one event increases/decreases the likelihood of future events correspondingly, ($iii$) \emph{Spatial-Temporal Point Process (STPP) Data} consists of data generated with two STPP models: the standard diffusion (STD) model which describes symmetric event spread in all directions; the Gaussian diffusion (GAU) \citep{musmeci1992space} model with diffusion shape and orientation varying with location. More details of these models are provided in Appendix~\ref{exp:dataset}.

\vspace{-0.1in}
\paragraph{Result} 
The test accuracies under various scenarios are summarized in Table~\ref{tab:syn_acc}. It shows that \texttt{RENAL} consistently outperforms other methods. It is worthwhile mentioning that when $P^*=\widehat{P}$, the reported accuracy corresponds to Type-I accuracy, whereas when $P^*\neq \widehat{P}$, it corresponds to Type-II accuracy. Additionally, we report the overall accuracy for each of the three types of time series by averaging the results across all scenarios in Table~\ref{tab:syn_acc}. Due to the limited sample size (single sequence), all methods do not exactly meet the pre-specified Type I error rate of $\alpha=0.05$ used to define the rejection region. Despite this, our method, \texttt{RENAL}, consistently achieves one of the highest Type I accuracies and best Type II accuracies with superior performance since:
($i$) It exhibits the highest overall accuracy and balanced performance across all scenarios;
($ii$) It requires few model specifications or assumptions, and ($iii$) It is applicable across all settings while some baseline methods are limited to certain cases.

For \emph{Time Series Data}, \texttt{RENAL} achieves strong overall performance in both Type I and Type II accuracies. In contrast, while some baseline methods can easily distinguish ARMA from GARCH due to their structural differences, they exhibit low accuracy when differentiating between two ARMA models. 
For \emph{TPP} and \emph{STPP Data}, \texttt{RENAL} also effectively capture time-dependency, achieving balanced Type I and II accuracies. However, the baseline methods either have low Type-I accuracy or fail to distinguish different models as reflected by their low type-II accuracy. Furthermore, both \texttt{EWD$-m$} and \texttt{Scott} suffer from imbalance accuracies and highly depend on the number of partitions -- denser partitions lead to sparse transition matrices and thus lower Type II accuracy, while coarser partitions may miss nuanced transition probabilities and reduce the Type I accuracy.  \texttt{RENAL} address these issues with adaptive binning. 

Additionally, the results shown in Table~\ref{tab:syn_acc} support our earlier claim that RENAL is robust to assumption violations. This is because RENAL consistently achieves higher and more stable accuracy than baseline methods, even when the sample size is relatively small (Table~\ref{tab:data} in Appendix~\ref{app:exp}) with only a single training data sequence of moderate length, which could potentially violate Assumption~\ref{assump:sufficient}.

\subsection{Real Data Study}
\label{exp:real-data}

We also assess the effectiveness of \texttt{RENAL} on real data including earthquake and weather data.
Details of the testing procedures are provided in Appendix~\ref{exp:testing-procedure}.

\vspace{-0.1in}
\paragraph{Data Description} We include two datasets for our study on real data: ($i$) \emph{Earthquake Data} contains all earthquake data with magnitudes greater than $3$ in North California \citep{waldhauser2008large} and Japan (from Japan Meteorological Agency (JMA) earthquake catalog) from January $1$, $1993$, to December $31$, $2011$, ($ii$) \emph{Weather Data} includes daily temperature records from Pittsburgh International Airport and San Francisco International Airport starting from January 1st, 1994 \citep{vose2014improved}. 

\vspace{-0.1in}

\paragraph{Results}
In real-world data studies, we will compare our method only to \texttt{MMD}, as most other baselines are inapplicable without knowledge of the ground truth data distribution.

The result is shown in Table \ref{tab:earthquake}.
It can be seen that across all datasets \texttt{RENAL} consistently achieves higher average accuracy, and strikes a better balance between Type I accuracy and Type II accuracy compared with \texttt{MMD}.
This is because \texttt{MMD} fails to model temporal dependencies in data sequences, resulting in extreme cases of either predicting nearly all false positives or all false negatives, which on the other hand highlights our method's efficacy in tackling complex time series testing problems. 
We note that both \texttt{RENAL} and \texttt{MMD} show imbalanced results for the earthquake STPP data, this is likely due to inherent high spatial variation in the NC data.

\section{CONCLUSION}

We proposed a novel framework called the REcurrent NeurAL Goodness-of-Fit test (RENAL), tailored to effectively assess the quality of generative time series models. Our approach transforms the time series into conditionally independent pairs of history embeddings using recurrent neural networks, followed by a chi-square-based goodness-of-fit test to evaluate the dependencies between real and generated data. We provide theoretical justification for this framework and demonstrate that optimizing the discretization of history embeddings enhances its performance. In our numerical studies, RENAL consistently outperforms state-of-the-art methods, achieving higher and more balanced testing accuracy across synthetic and real-world datasets, including time series, point processes, and spatio-temporal point processes.

\bibliographystyle{apalike}
\bibliography{refs.bib}

\newpage
\onecolumn
\appendix

\section{PROOFS}\label{app:proof}

\subsection{Proof of Lemma \ref{lemma:markov}}\label{app:proof-markov}

For any subset $B \subset \mathcal{H}$, the left-hand side of the Markov property becomes
\begin{align*}
& \mathbb{P}(h_{i+1} \in B \mid h_i, h_{i-1}, \ldots, h_1) \\
=& \int_{x \in \mathcal X} \mathbb{P}(h_{i+1} \in B \mid h_i , h_{i-1} , \ldots, h_1,X_{i} = x) f_{X_{i}}(x \mid h_i, h_{i-1}, \ldots, h_1) dx,
\end{align*}
where $f_{X_{i}}(\cdot\mid h_i, h_{i-1}, \ldots, h_1)$ is the conditional density function of $X_{i}$, the $i$-th observation in the time series, conditioned on all past embeddings. Since $h_{i+1}$ is determined by $h_i$ and $X_{i}$, we have:
\begin{equation*}
    \mathbb{P}(h_{i+1} \in B \mid h_i, h_{i-1}, \ldots, h_1, X_{i}=x) = \mathbb{P}(h_{i+1} \in B \mid h_i, X_{i}=x).
\end{equation*}
Moreover, note that the distribution of $X_{i}$ is fully determined by $h_i$ in the decoding process. Thus we have
\begin{equation*}
    f_{X_{i}}(x \mid h_i, h_{i-1}, \ldots, h_1) = f_{X_{i}}(x \mid h_i).
\end{equation*}
Therefore, we simplify the left-hand side expression to
\begin{align*}
    \mathbb P(h_{i+1}\in B |h_i,h_{i-1},\ldots, h_1) 
    & = \int_{x \in \mathcal X} \mathbb P(h_{i+1}\in B|h_i, X_{i}=x) f_{X_{i}}(x|h_i)dx \\
    & = \mathbb P(h_{i+1} \in B |h_i).
\end{align*}

To prove homogeneity, we first note that $\phi$ is homogeneous across different time steps (namely we are using the same embedding network across times), \ie, $h_{i+1} = \phi(x_i, h_i; \theta)$ for all $i$. Therefore, for any $y \in \mathcal H$ and $x \in \mathcal X$,
\begin{equation}\label{app-eq:h_markov}
  \begin{aligned}
     \mathbb P(h_i \in B | h_{i-1} = y, X_{i-1} = x)&  = \mathbb P(\phi(X_{i-1}, h_{i-1}) \in B |h_{i-1} = y, X_{i-1} = x))\\
     & = \mathbb P(\phi(X_{i}, h_{i}) \in B |h_{i} = y, X_{i} = x)) \\
     & = \mathbb P(h_{i+1} \in B | h_{i} = y, X_{i} = x).
\end{aligned}  
\end{equation}
Since the conditional distribution of $X_i$, given $h_i$ is also stationary, as the distribution is determined by the embedding function $g$, we have 
\begin{align*}
    \mathbb P(h_{i+1}\in B|h_i = y) & = \int_{x\in \mathcal X}\mathbb P(h_{i+1} \in B|h_i = y, X_{i} = x)f_{X_{i}|h_i}(x|y)dx\\
    & \overset{(i)}{=} \int_{x\in \mathcal X}\mathbb P(h_i \in B | h_{i-1} = y, X_{i-1} = x) f_{X_{i-1}|h_{i-1}}(x|y)dx\\
    & = \mathbb P(h_{i}\in B|h_{i-1} = y),
\end{align*}
where the equality ({\it i}) used the homogeneous property in \eqref{app-eq:h_markov}.

\subsection{Proof of Lemma \ref{lem:discret-markov}}\label{app:discrete-markov-proof}

Using Markov property of $\{h_i\}_{i\in\mathbb N}$ and the fact that $y_i$ deterministically determined by $h_i$,
    \begin{align*}
    & \mathbb{P}(y_i = a_i \mid y_{i-1} = a_{i-1}, \ldots, y_1 = a_1) \\
    =& \mathbb{P}({h}_i \in \mathcal{H}_{a_i} \mid {h}_{i-1} \in \mathcal{H}_{a_{i-1}}, \ldots, h_1 \in \mathcal{H}_{a_1}) \\
    = & \int_{\eta\in \mathcal{H}_{a_{i-1}}}\mathbb{P}({h}_{i} \in \mathcal{H}_{a_i} \mid {h}_{i-1}=\eta) f_{h_{i-1}}(\eta|{h}_{i-1} \in \mathcal{H}_{a_{i-1}}, \ldots, h_1 \in \mathcal{H}_{a_1}) d\eta\\
    =& \int_{\eta\in \mathcal{H}_{a_{i-1}}}\mathbb{P}({h}_{i} \in \mathcal{H}_{a_i} \mid {h}_{i-1}=\eta) f_{h_{i-1}}(\eta|{h}_{i-1} \in \mathcal{H}_{a_{i-1}}) d\eta\\
    =&\mathbb{P}({h}_{i} \in \mathcal{H}_{a_i} \mid {h}_{i-1} \in \mathcal{H}_{a_{i-1}}) \\   
    =& \mathbb{P}(y_{i} = a_i \mid y_{i-1} = a_{i-1}).
 \end{align*}
Furthermore, when assuming the sequence $\{h_i\}_{i\in \mathbb N}$ is stationary, using the homogeneity of $\{h_i\}_{i\in \mathbb N}$ (Lemma~\ref{lemma:markov}) and the deterministic nature of $y_i$, for arbitrary discrete states $a$ and $\zeta$,
\begin{equation*}
\begin{aligned}
\mathbb{P}\left(y_{i+1} = a|y_i = \zeta\right) 
    & = \mathbb{P}\left(h_{i+1} \in \mathcal{H}_{a}|h_i \in H_{\zeta}\right) \\
    & = \int_{y \in H_{\zeta}}\mathbb P\left(h_{i+1} \in \mathcal{H}_{a}|h_i = y\right)f_{h_{i}}(y|h_{i}\in H_\zeta)dy  \\
    & = \int_{y \in H_{\zeta}}\mathbb P\left(h_i \in \mathcal{H}_{a}|h_{i-1} = y\right)f_{h_{i-1}}(y|h_{i-1}\in H_\zeta)dy \\
    & = \mathbb{P}\left(y_{i} = a|h_{i-1} = \zeta\right).    
\end{aligned}
\end{equation*}

Thus, $\{y_i\}_{i\geq 0}$ has the Markovian and homogeneous property.

\subsection{Proof of Proposition~\ref{prop:markov-transition}}

According to Lemma \ref{lemma:markov}, the embedding sequence $\{h_i\}_{i=1}^n$ forms a homogeneous Markov chain. This implies that the distribution $\mathbb{P}$ can be fully characterized by the transition densities between successive history embeddings.
Thus, for two data sequences $D_0 \sim \mathbb{P}^\star$ and $D_1 \sim \mathbb{\widehat{P}}$, the distributions $\mathbb{P}^\star$ and $\mathbb{\widehat{P}}$ can be uniquely and sufficiently represented by their respective transition mappings ${Q}^\star$ and $\widehat{Q}$. The hypothesis test \eqref{eq:markov-transition-test-objective} that tests the equality of these transition mappings is therefore equivalent to testing whether $\mathbb{P}^\star$ and $\mathbb{\widehat{P}}$ are the same.

\subsection{Proof of Proposition \ref{prop:chi}} \label{app:chi-proof}

The proof basically follows the corresponding proof in \citep{billingsley1961statistical}, with some shortening and elaborations to aid understanding. We will first set up and prove two lemmas, Lemma \ref{lem:lem 3.2} and Lemma \ref{lem:thm 3.1}, which are useful to prove Proposition \ref{prop:chi}. 
\begin{lemma}[Lemma 3.2 in \citep{billingsley1961statistical}] \label{lem:lem 3.2} 
Let $\{x_1, x_2, \ldots, x_n\}$ be a sequence of samples from a first-order stationary discrete Markov chain with $m$ states with a transition count matrix $C_m = (c_{uv})_{u,v = 1}^m$ and transition probability matrix $Q_m = (q_{uv})_{u,v = 1}^m$. Denote $c_u := \sum_{v = 1}^m c_{uv}$ as total counts of state $u$ and $q_{u} := \sum_{v = 1}^m q_{uv}$ as the total probability of state $u$. Then the weak law of large number holds, \ie, for arbitrary state $u$, as $n \to \infty$, 
\begin{equation}
\frac{c_u}{n} \overset{P}{\to} q_{u} , \ \text{as }n\to\infty.
\end{equation}
\end{lemma}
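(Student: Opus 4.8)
The plan is to establish the weak law of large numbers for the occupation counts $c_u$ of a finite-state stationary Markov chain via a second-moment argument, following the ergodic structure of the chain. First I would write $c_u = \sum_{i=1}^{n} \mathbbm{1}\{x_i = u\}$ (up to a negligible boundary term distinguishing $c_u$ from $\sum_{v} c_{uv}$, which differs by at most one count), so that $c_u/n$ is the empirical time-average of the indicator function $g(x) = \mathbbm{1}\{x = u\}$. By stationarity, $\mathbb{E}[c_u/n] = \mathbb{P}(x_1 = u) = q_u$, where here $q_u$ denotes the stationary probability of state $u$ (identified with $\sum_v q_{uv}$ under stationarity since the chain is in its invariant regime). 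Thus it suffices to show $\var(c_u/n) \to 0$, after which Chebyshev's inequality gives the stated convergence in probability.

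Next I would control the variance by expanding $\var(c_u/n) = n^{-2}\sum_{i,j=1}^n \mathrm{Cov}\big(\mathbbm{1}\{x_i=u\},\mathbbm{1}\{x_j=u\}\big)$ and bounding the covariances using the mixing (or, equivalently, the spectral-gap) property of an irreducible aperiodic finite Markov chain: there exist constants $C > 0$ and $\rho \in (0,1)$ such that $|\mathrm{Cov}(\mathbbm{1}\{x_i=u\}, \mathbbm{1}\{x_j=u\})| \le C\rho^{|i-j|}$. Summing the geometric series gives $\sum_{i,j} C\rho^{|i-j|} = O(n)$, hence $\var(c_u/n) = O(1/n) \to 0$. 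Alternatively, if one does not wish to invoke geometric ergodicity, the classical route is Doob's ergodic theorem / the strong law for stationary ergodic sequences (the irreducible finite chain is ergodic), which directly yields $c_u/n \to q_u$ almost surely, and hence in probability; I would mention this as the cleaner high-level justification and keep the second-moment computation as the elementary backup that makes the rate explicit (this explicit $O(1/n)$ rate is also what the downstream chi-square argument in \citep{billingsley1961statistical} ultimately needs).

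The main obstacle is the bookkeeping around \emph{which} $q_u$ appears in the limit and why stationarity makes $\sum_v q_{uv}$ equal to the invariant probability of $u$: one must be careful that $c_u = \sum_v c_{uv}$ counts transitions \emph{out of} state $u$ among $x_1,\dots,x_{n-1}$, so $c_u/n$ and $c_u/(n-1)$ have the same limit, and under the stationary initial distribution $\pi$ we have $\pi_u = \sum_v \pi_u q_{uv}$, so the ``row sum of the probability matrix'' interpretation of $q_u$ is consistent with the occupation-measure interpretation. A secondary technical point is justifying the uniform geometric covariance bound; I would either cite standard finite-Markov-chain theory (Perron--Frobenius / spectral gap for the transition operator restricted to mean-zero functions) or, to stay self-contained, appeal to the ergodic theorem for stationary sequences and note that a finite irreducible chain is ergodic, which suffices for convergence in probability without quantifying the rate.
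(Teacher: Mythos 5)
Your proposal takes essentially the same route as the paper's proof: write $c_u$ as a sum of indicators $\mathbbm{1}\{x_s = u\}$, use stationarity to identify $\mathbb{E}[c_u/n]$ with $q_u$, show $\var(c_u/n) = O(1/n)$ by controlling the covariances, and conclude by Chebyshev's inequality. The only difference is in how the covariances are bounded — the paper uses the elementary bound $q_u q_{uu}^{k} - q_u^2 \le 0$ (i.e.\ the $k$-step return probability satisfies $q_{uu}^{k} \le q_u$) to get $\var(c_u/n) \le 1/n$ directly, whereas you invoke a geometric mixing/spectral-gap bound (which tacitly requires irreducibility and aperiodicity) or the ergodic theorem — but all variants deliver the same $O(1/n)$ variance and the same conclusion.
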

\begin{proof}
Define the random variable $z_s(u) := \mathbbm{1} (x_s = u)$. Then $c_u = \sum_{u = 1}^n z_s(u)$. From stationarity of the chain, it follows that $\mathbb E[z_s(u)] = q_{u}$, so $\mathbb E[c_u/n] = n^{-1}\mathbb E[\sum_{s = 1}^n z_s(u)] =  q_{u}$, 
\begin{multline*}
    Cov(z_s(u), z_{s^\prime}(u)) = 
\mathbb E[(z_s(u) - q_{u})(z_{s^{\prime}}(u) - q_u)] \\
= P(x_s = u, x_{s^\prime}=u) - {q_{u}}^2 = \begin{cases}
         q_{u} - {q_{u}}^2 & \text{if } s^{\prime} = s\\
         q_{u}q_{uu}^{|s^{\prime}-s|} - {q_{u}}^2 & \text{otherwise}
     \end{cases},
\end{multline*}
and 
\begin{align*}
    Var(\frac{c_u}{n}) & = \frac{1}{n^2}\sum_{s = 1}^n \sum_{s^\prime = 1}^n Cov(z_s(u), z_{s^\prime}(u)) \\
    & = \frac{1}{n^2}\left\{n(q_u - q_u^2) + 2 \sum_{s = 1}^n\sum_{s^\prime = s+1}^n \left(q_uq_{uu}^{|s^\prime - s|}-q_u^2\right)\right\}\\
    & \leq \frac{1}{n^2}\left\{n(q_u - q_u^2) + 2 \sum_{s = 1}^n\sum_{s^\prime = s+1}^n \left(q_u^2-q_u^2\right)\right\}\tag{$0 \leq q_{uu}^{\cdot}\leq q_u \leq 1$}\\
    & \leq \frac{1}{n}.
\end{align*}
Therefore $Var(\frac{c_u}{n}) \to 0$ as $n \to \infty$, then by Chebyshev's inequality, for $\epsilon > 0$,
\[
\mathbb P\left(\left|\frac{c_u}{n} - q_u\right|\geq \epsilon\right)\leq \frac{Var(\frac{c_u}{n})}{\epsilon^2} \leq \frac{1}{n\epsilon^2}\to 0, \ \text{as }n \to \infty.
\]
Therefore, as $n \to \infty$, $\frac{c_u}{n} \overset{P}{\to} q_{u}$.
\end{proof}

\begin{lemma}[Theorem 3.1 from \citep{billingsley1961statistical}]\label{lem:thm 3.1} Let $\{x_1, x_2, \ldots, x_n\}$ be a sequence of samples from a first-order stationary discrete Markov chain with $m$ states with a transition count matrix $C_m = (c_{uv})_{u,v = 1}^m$ and transition probability matrix $Q_m = (q_{uv})_{u,v = 1}^m$. Denote $c_u := \sum_{v = 1}^m c_{uv}$ as the total counts at state $u$. Let \[\xi_{uv} = \frac{c_{uv} - c_u q_{uv}}{c_u^{1/2}}.\]
The distribution of the $m^2$-dimensional random vector $\xi = (\xi_{uv})$ converges as $n \to \infty$ to the normal distribution.
\end{lemma}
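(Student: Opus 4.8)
The plan is to establish a central limit theorem for the row-wise martingale-difference structure hidden in the counts $c_{uv}$, and then assemble the coordinates. First I would fix a state $u$ and look at the random variables $w_{s}(u,v) := \mathbbm{1}\{x_{s-1} = u, x_s = v\} - q_{uv}\,\mathbbm{1}\{x_{s-1} = u\}$ for $s = 2,\ldots,n$, so that $c_{uv} - c_u q_{uv} = \sum_{s} w_s(u,v)$ (up to negligible boundary terms from the first/last index). The key observation is that, conditioned on $x_{s-1}$, the increment $w_s(u,v)$ has mean zero: $\EE[w_s(u,v) \mid x_{s-1}] = 0$. Hence $\{w_s(u,v)\}$ is a martingale-difference sequence with respect to the natural filtration $\mathcal F_s = \sigma(x_1,\ldots,x_s)$, and the same holds jointly for the vector $(w_s(u,v))_{u,v}$.

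Next I would invoke the martingale central limit theorem. The two ingredients to check are: (i) a conditional-variance (Lindeberg-type) stabilization, i.e. $\frac1n \sum_{s} \EE[w_s(u,v) w_s(u',v') \mid \mathcal F_{s-1}] \overset{P}{\to} \Sigma_{(uv),(u'v')}$ for an explicit limiting covariance, and (ii) a negligibility condition on the jumps, which is immediate since the $w_s$ are uniformly bounded by $1$. For (i), note $\EE[w_s(u,v) w_s(u',v') \mid x_{s-1}]$ is a bounded function of $x_{s-1}$ alone, so its Cesàro average converges in probability to its stationary expectation by the ergodic theorem (or directly by the same Chebyshev argument as in Lemma \ref{lem:lem 3.2}, since the chain is stationary with all states communicating). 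This gives a CLT for the raw sums $n^{-1/2}\sum_s w_s(u,v)$, jointly over all $(u,v)$, toward a centered Gaussian vector.

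Finally I would convert from the $n^{-1/2}$ normalization to the $c_u^{-1/2}$ normalization appearing in $\xi_{uv}$. Since $\xi_{uv} = (c_{uv} - c_u q_{uv})/c_u^{1/2} = \bigl((c_{uv} - c_u q_{uv})/n^{1/2}\bigr)\cdot (n/c_u)^{1/2}$, and Lemma \ref{lem:lem 3.2} gives $c_u/n \overset{P}{\to} q_u > 0$, Slutsky's theorem rescales each block of coordinates by the constant $q_u^{-1/2}$ and preserves joint normality. The resulting limit is the $m^2$-dimensional normal distribution claimed (degenerate, since $\sum_v w_s(u,v) = 0$ forces $\sum_v \xi_{uv} \to 0$ for each $u$ — this is exactly why the eventual chi-square in Proposition \ref{prop:chi} has $m(m-1)$ rather than $m^2$ degrees of freedom). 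The main obstacle I anticipate is item (i): proving the joint convergence of the conditional covariances cleanly and identifying the limiting covariance matrix, making sure the cross-row ($u \neq u'$) terms vanish and the within-row structure is $q_u(q_{uv}\delta_{vv'} - q_{uv}q_{uv'})$ in the limit; everything else is boundedness bookkeeping and an application of Slutsky.
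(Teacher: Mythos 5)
Your proposal is correct, but it proves the lemma by a genuinely different route than the paper. The paper reproduces Billingsley's original argument: the chain is re-generated from an independent doubly-indexed array $\{r_{uw}\}$ of ``next-state'' draws with $P(r_{uw}=v)=q_{uv}$, so that the row counts $(c_{u1},\ldots,c_{um})$ are the frequency counts of the first $c_u$ trials in row $u$; Lemma \ref{lem:lem 3.2} plus Kolmogorov's maximal inequality then show that replacing the random index $c_u$ by the deterministic $[nq_u]$ perturbs the normalized sums only by $o_P(1)$, and the classical CLT for i.i.d.\ multinomial trials finishes the proof. You instead exploit the exact martingale-difference decomposition $c_{uv}-c_uq_{uv}=\sum_s w_s(u,v)$ with $w_s(u,v)=\mathbbm{1}\{x_{s-1}=u,x_s=v\}-q_{uv}\mathbbm{1}\{x_{s-1}=u\}$, apply the martingale CLT (Lindeberg condition trivial by boundedness), and rescale by Slutsky using $c_u/n\overset{P}{\to}q_u>0$. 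Your anticipated main obstacle is in fact easier than you fear: the conditional covariance $\mathbb{E}[w_s(u,v)w_s(u',v')\mid x_{s-1}]$ vanishes identically for $u\neq u'$ (the indicators of $\{x_{s-1}=u\}$ and $\{x_{s-1}=u'\}$ cannot both be one), and for $u=u'$ it equals $\mathbbm{1}\{x_{s-1}=u\}\left(q_{uv}\delta_{vv'}-q_{uv}q_{uv'}\right)$, so its Ces\`aro average is exactly $(c_u/n)$ times a constant and stabilizes by Lemma \ref{lem:lem 3.2} alone, with no ergodic theorem needed; after the $c_u^{-1/2}$ rescaling the limit covariance per row is $q_{uv}\delta_{vv'}-q_{uv}q_{uv'}$, matching the asymptotically independent multinomial rows that Billingsley's construction makes transparent. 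What each approach buys: yours avoids the coupling construction and the maximal-inequality bookkeeping, identifies the limiting covariance explicitly (including the row-sum degeneracy that explains the $m(m-1)$ degrees of freedom later), and generalizes more readily; the paper's stays at the level of the i.i.d.\ CLT and elementary inequalities, at the cost of the random-time-change argument.
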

\begin{proof}
The proof follows the proof for Theorem 3.1 in \citep{billingsley1961statistical}. The process $\{x_n\}$ having a sequence of states $\{a_n\}$ can be viewed as having been generated in the following fashion. Consider an independent collection of random variables $x_1$ and $r_{un}$ (for $u = 1, 2, \ldots, m$ and $n = 1, 2, \ldots$) such that
$P(x_1 = u) = q_{u}$ and $P(r_{un} = v) = q_{uv}$. 

We begin by sampling $x_1$. $x_2$ is defined to be $x_2=r_{x_11}$. If $x_1, x_2, \ldots, x_n$ have been defined, $x_{n+1}$ is defined to be $r_{x_n, w}$, where $w-1$ is the number of $l$ ($1\leq l < n$) such that $x_l = x_n$.

By definition, $\{x_i = a_i, 1 \leq i \leq n\} = \{x_1 = a_1, r_{a_{i-1}w_{i-1}} = a_i, 2 \leq i \leq n\}$ where $w_{i}- 1$ is the number of elements among $\{a_1, \ldots , a_{i-1}\}$ that are equal  to $a_i$. 
Since the variables involved are all distinct and independent, 
\begin{equation*} 
\begin{aligned}
P(x_i = a_i,1 \leq i \leq n) & = P(x_1 = a_1)P(r_{a_1w_1 }= a_2) \cdots P(r_{a_{n-1} w_{n-1}} = a_{n}) \\
& = q_{a_1} q_{a_1a_2}\cdots q_{a_{n-1}a_n}.    
\end{aligned}
\end{equation*}

By process of generation,  $(c_{u1}, c_{u2}, \ldots, c_{um})$ is the frequency count of $\{r_{u1}, \ldots, r_{uc_{u}}\}$. By Lemma \ref{lem:lem 3.2}, as $n \to \infty$, $c_u \to nq_{u}$. 

We can then compare $(c_{u1}^{(0)}, \ldots, c_{um}^{(0)})$ with the frequency count $(c_{u1}^{(1)}, \ldots, c_{um}^{(1)})$ of $\{r_{u1}, \ldots, r_{u[nq_{u}^{(0)}]}\}$.
From the independence of the array $\{r_{un}\}$ and the central limit theorem for multinomial trials, it follows that the $m^2$ random variables $\left(\frac{c_{uv}^{(1)}- [n q_{u}^{(0)}] q_{m_v}^{(0)}}{\sqrt{n q_{uv}^{(0)}}}\right)$ have asymptotic jointly normal distribution. Define $e_w$ as $e_{w} = \mathbbm{1}{(r_{uw} = v)} - q_{uv}^{(0)}$ and put $R_w = e_1 + \ldots + e_w$. It is clear that $e_w$'s are independent and identically distributed with mean 0 and variance $\sigma^2 = q_{uv}^{(0)}(1-q_{uv}^{(0)})$. Now \begin{equation}\frac{c_{uv}^{(1)} - \left[ n q_{u}^{(0)} \right] q_{uv}^{(0)}}{n^\frac{1}{2}} - \frac{c_{uv}^{(0)} - c_u^{(0)} q_{uv}^{(0)}}{n^\frac{1}{2}} = \frac{R_{[nq_{u}^{(0)}]} - R_{c_u^{(0)}}}{n^{\frac{1}{2}}}.\end{equation}
By Lemma \ref{lem:lem 3.2}, $\lim_{n\to\infty} \frac{c_u^{(0)}}{n} = q_{u}^{(0)}$, for arbitrary $\epsilon>0$, $\exists N$ sufficiently large such that $\forall n > N$, $P(|c_u^{(0)} - [nq_{u}^{(0)}]|>n\epsilon^3)<\epsilon$, then 
\begin{align*}
&P\left(\left|\frac{c_{uv}^{(1)} - \left[ n q_{u}^{(0)} \right] q_{uv}^{(0)}}{n^\frac{1}{2}} - \frac{c_{uv}^{(0)} - c_u^{(0)} q_{uv}^{(0)}}{n^\frac{1}{2}}\right|>\epsilon\right)\\
=& P\left(\left|\frac{R_{[n q_{u}^{(0)}]} - R_{c_u^{(0)}}}{n^\frac{1}{2}}\right| > \epsilon\right) \\
\leq & P\left(\left|c_u^{(0)} - [n q_{u}^{(0)}]\right| > n \epsilon^3\right) + P\left(\max_{|w-[n q_{u}^{(0)}]|\leq n \epsilon^3} |R_{[n q_{u}^{(0)}]} - R_w| > \epsilon n^\frac{1}{2}\right)\\
\leq & \epsilon + 2P\left(\max_{1 \leq w \leq n \epsilon^3} |R_w| > \frac{1}{2}\epsilon n^\frac{1}{2}\right)\\
\leq & \epsilon + 2\left(\frac{4}{n\epsilon^2}\right)\left(n \epsilon^3\sigma^2\right)  = (1 + 8 \sigma^2) \epsilon. \tag{by Kolmogorov’s inequality in \citep{feller1958introduction}}
\end{align*}
Since $\epsilon$ can be arbitrarily small, $\frac{c_{uv}^{(1)} - \left[ n q_{u}^{(0)} \right] q_{uv}^{(0)}}{n^{1/2}} - \frac{c_{uv}^{(0)} - c_u^{(0)} q_{uv}^{(0)}}{n^{1/2}}\to 0$, \ie, $\frac{c_{uv}^{(1)} - \left[ n q_{u}^{(0)} \right] q_{uv}^{(0)}}{(nq_{u}^{(0)})^\frac{1}{2}} - \frac{c_{uv}^{(0)} - c_u^{(0)} q_{uv}^{(0)}}{(nq_{u}^{(0)})^\frac{1}{2}}\to 0$ and $\frac{c_{uv}^{(0)} - c_u^{(0)} q_{uv}^{(0)}}{(nq_{u}^{(0)})^\frac{1}{2}}$ has the same limiting distribution of $\frac{c_{uv}^{(1)} - \left[ n q_{u}^{(0)} \right] q_{uv}^{(0)}}{(nq_{u}^{(0)})^\frac{1}{2}}$. By Lemma \ref{lem:lem 3.2}, $\xi_{uv} = \frac{c_{uv}^{(0)} - c_u^{(0)} q_{uv}^{(0)}}{c_u^\frac{1}{2}}$ has the same limiting distribution of $\frac{c_{uv}^{(0)} - c_u^{(0)} q_{uv}^{(0)}}{(nq_{u}^{(0)})^\frac{1}{2}}$. $\xi = (\xi_{uv})$ converges to normal distribution as $n \to \infty$.
\end{proof}

Now recall that we assume two data sequences $D_0$ and $D_1$, and obtain the transition count matrices $C_m^{(k)}=(c_{uv}^{(k)})_{u,v=1}^m$, $k=0,1$, from $D_0$ and $D_1$, respectively. Under the null hypothesis that $D_0$ and $D_1$ follow the same distribution, we have the population transition probability matrices for the learned historical embedding should equal to each other, i.e., $Q^*=\widehat{Q}$, we denote $Q^*=\widehat{Q}=(q_{uv})_{u,v=1}^m$.

 Define $\xi_{uv} = \frac{c_{uv}^{(0)} - c_u^{(0)} q_{uv}}{{c_u^{(0)}}^{\frac{1}{2}}}$. By Lemma \ref{lem:thm 3.1}, $\xi = (\xi_{uv})$ converges to normal distribution as $n\to \infty$. The frequency counts $(c_{u1}^{(k)}, c_{u2}^{(k)},\ldots, c_{um}^{(k)})$ for $k \in \{0, 1\}$ have multinomial distributions with probabilities $(q_{u1}, q_{u2}, \ldots, q_{um})$ and  $\mathbb E(c_{uv}^{(k)})=c_u^{(k)}q_{uv}$. By \citep{pearson1900x}, as $c_u^{(k)}\to\infty$, 
\begin{equation*}
    \sum_{v=1}^m\frac{(c_{uv}^{(k)}-c_u^{(k)}q_{uv})}{c_u^{(k)}q_{uv}} \sim \chi^2_{(m-1)},
\quad \text{and} \quad
    \sum_{u = 1}^m \sum_{v = 1}^m \frac{(c_{uv}^{(k)}-c_u^{(k)}q_{uv})^2}{c_u^{(k)}q_{uv}} \sim \chi^2_{m(m-1)} .
\end{equation*}

Under the null hypothesis, it is intuitively clear that the common estimate is $\widehat{q}_{uv}=\frac{c_{uv}^{(0)} + c_{uv}^{(1)}}{c_u^{(0)} + c_u^{(1)}}$ by \citep{billingsley1961statistical, darwin1959note}. Plugging in the common estimate, we have
\[\sum_{u=1}^m\sum_{v=1}^m\dfrac{(c_{uv}^{(0)}-c_{u}^{(0)}\dfrac{c_{uv}^{(0)}+c_{uv}^{(1)}}{c_u^{(0)}+c_{u}^{(1)}}) ^{2}}{c_{u}^{(0)}\dfrac{c_{uv}^{(0)}+c_{uv}^{(1)}}{c_u^{(0)}+c_{u}}^{(1)}}+ \sum_{u=1}^m\sum_{v=1}^m\dfrac{ (c_{uv}^{(1)}-c_{u}^{(1)}\dfrac{c_{uv}^{(0)}+c_{uv}^{(1)}}{c_{u}^{(0)}+c_{u}^{(1)}}) ^{2}}{c_{u}^{(1)}\dfrac{c_{uv}^{(0)}+c_{uv}^{(1)}}{c_{u}^{(0)}+c_{u}^{(1)}}} \sim \chi^2_{m(m-1)}.\]
The above expression can be simplified as, \begin{align*}
   & \sum_{uv} \dfrac{\left(c_{uv}^{(0)}-c_{u}^{(0)}\dfrac{c_{uv}^{(0)}+c_{uv}^{(1)}}{c_u^{(0)}+c_{u}^{(1)}}\right) ^{2}}{c_{u}^{(0)}\dfrac{c_{uv}^{(0)}+c_{uv}^{(1)}}{c_u^{(0)}+c_{u}^{(1)}}}+ \sum _{uv}\dfrac{ \left(c_{uv}^{(1)}-c_{u}^{(1)}\dfrac{c_{uv}^{(0)}+c_{uv}^{(1)}}{c_{u}^{(0)}+c_{u}^{(1)}}\right) ^{2}}{c_{u}^{(1)}\dfrac{c_{uv}^{(0)}+c_{uv}^{(1)}}{c_{u}^{(0)}+c_{u}^{(1)}}}\\
   =& \sum_{uv}\dfrac{c_u^{(0)}c_{u}^{(1)}}{(c_{uv}^{(0)}+c_{uv}^{(1)})\left( c_u^{(0)}c_{u}^{(1)}\right) ^{2}(c_{u}^{(0)}+c_{u}^{(1)})}   \left( c_{u}^{(0)}+c_{u}^{(1)}\right) \left( c_{u}^{(0)}c_{uv}^{(1)}-c_{u}^{(1)}c_{uv}^{(0)}\right) ^{2}\\
=&\sum _{uv}\dfrac{c_{u}^{(0)}c_{u}^{(1)}}{c_{uv}^{(0)}+c_{uv}^{(1)}}\left( \dfrac{c_{uv}^{(0)}}{c_{u}^{(0)}}-\dfrac{c_{uv}^{(1)}}{c_{u}^{(1)}}\right) ^{2}\\
= & \sum _{uv}\dfrac{c_{u}^{(0)}c_{u}^{(1)}}{c_{uv}^{(0)}+c_{uv}^{(1)}}\left( q_{uv}^{(0)}-q_{uv}^{(1)}\right)^{2}\sim \chi^2_{m(m-1)}.
\end{align*} 

\section{ADDITIONAL TEST STATISTICS DETAILS}\label{app:method}

\paragraph{Smoothing Constraints for Objective Function}\label{app:smoothing}

The complete equation for the second derivative is shown below: 

\begin{equation*}
   \text{S}(Q_m) = -\sqrt{\sum_{u=2}^{m-1} \sum_{v=2}^{m-1} (\nabla^2 q_{uv})^2 }\approx  -\sqrt{\sum_{u=2}^{m-1} \sum_{v=2}^{m-1} \left(q_{u+1,v} + q_{u-1,v} + q_{u,v+1} + q_{u,v-1} - 4q_{uv}\right)^2}.
   \end{equation*}
   This smoothing constraint is chosen as the second derivative, which measures the curvature or the rate of change in the transition probabilities across states. Abrupt transitions (i.e., sudden large changes in transition probabilities between consecutive states) result in large second-order differences. Therefore, minimizing the square of the second-order differences (or equivalently, maximizing the negative square of the second-order differences) encourages smoother transitions. This smoothing regularization results in more stable transitions between states by penalizing abrupt changes, leading to gradual and consistent shifts in transition probabilities.
   
   Incorporating the hyperparameters $\lambda$ to control for the weights of smoothing, the objective is then:
    \begin{align*}
       \text{Objective} 
       & = \lVert \{Q_m^{\star}\} - \widehat{Q}_m\} \rVert_F + \lambda  \left( \text{S}(Q_m^\star) +  \text{S}(\widehat{Q}_m)\right)\\
       & = \sqrt{\sum_{u,v}\left(q_{uv}^{\star} - \widehat{q}_{uv}\right)^2} - \lambda\left( \sqrt{\sum_{u=2}^{m-1} \sum_{v=2}^{m-1} (\nabla^2 q_{uv}^{\star})^2} + \sqrt{\sum_{u=2}^{m-1} \sum_{v=2}^{m-1} (\nabla^2 \widehat{q}_{uv})^2}\right).
    \end{align*}

\section{EXPERIMENTS DETAILS}\label{app:exp}
The following sections provide more details of the synthetic and real-world experiments. The code is available in \href{https://github.com/aoranzhangmia/Neural-GoF-Time}{https://github.com/aoranzhangmia/Neural-GoF-Time}.

\subsection{Baseline Details}\label{exp:baseline}
For \texttt{MMD}, \texttt{KSD}, and \texttt{Stein}, input sequences are split into $50$ subsequences and the radial basis function kernel is used with bandwidth set as the median of pairwise squared Euclidean distances among data points. \texttt{KSD} utilizes the trapezoidal rule with $101$ points for integration. For \texttt{Stein} test, the bootstrap flip probability is set at $0.01$. 

For discretization baselines, we construct \texttt{EWD$-m$} and \texttt{Scott} discretization for a history embedding sequence $\{h_i\} \subset \mathcal H \subseteq \mathbb R^{p}$ as follows: $(i)$ \texttt{EWD$-m$}: for each dimension $p^\prime = 1, 2, \ldots, p$ of the latent space $\mathcal H \subseteq \mathbb R^p$, we partition the range of $\{h_i\}_{p^\prime}$ (the $p^\prime$-th dimension of $\{h_i\}$ into $m$ equally sized intervals (bins). $(ii)$ \texttt{Scott}: For each dimension $p^\prime$, the number of bins ($m^{p^\prime}$) for the equal-sized binning is determined by Scott's rule \citep{scott1979optimal}:
\[m^{p^\prime} = \left\lceil \frac{\max\left(\{h_i\}_{p^\prime}\right) - \min \left(\{h_i\}_{p^\prime}\right)}{3.5\cdot\sigma_{\{h_i\}_{p^\prime}}\cdot n^{-1/3}} \right\rceil, \]
where $\sigma_{\{h_i\}_{p^\prime}}$ is the standard deviation of the sequence $\{h_i\}_{p^\prime}$ and $n$ is the length of the $\{h_i\}$.

\subsection{Time Series Generative Model Details}\label{exp:nn-details}
The detailed setting for the time series generative models used in the experiment can be referred to Table~\ref{tab:nn-param}. Here $\ell$ denotes learning rate, $e$ denotes epoches, $N_b$ represents the maximum number of bins per dimension, and $\lambda$ sets the corresponding smoothing constraints. The maximum number of discretized states is 100, and ensure that transition count matrices have at least 15\% non-zero entries to prevent overly sparse transition probability matrices. Note that as the dimension of laten space increase, the proportion of non-zero entries and $\lambda$ should be increased accordingly to avoid too sparse transition probability matrices.
\begin{table}[!t]
\centering
\caption{Neural Network Specifications and Discretization Settings.
}
\label{tab:nn-param}
\resizebox{0.85\linewidth}{!}{
\begin{tabular}{c ccccc c c}
\Xhline{1pt}
\multirow{2}{*}{Type of Models} & \multicolumn{5}{c}{Neural Network } & \multicolumn{2}{c}{Discretization} \\ \cline{2-8} 
& Network Type & Hidden Dimension & $\ell$ & Optimizer & $e$ & $N_b$ & $\lambda$ \\ \hline
Time Series & LSTM & 6 & 0.001 & Adam & 100 & 6 & $0.1$\\ 
TPP & NeuralHawkes  & 4 & 0.00025 & SGD & 150 & 20 & $0.08$ \\ 
STPP & NHPP & 4 & 0.0005 & SGD & 200 & 20 & $0.08$       \\\Xhline{1pt}
\end{tabular}}
\end{table}

\subsection{Datasets Details} \label{exp:dataset}
The general setting is presented in Table~\ref{tab:data}. \textbf{Type} denotes the model type if the dataset is synthetic and denotes the kind of data if the dataset is obtained from real observation. \textbf{Range} denotes the time range (one-dimensional) if the data is time series data, and time $\times$ longitude $\times$ latitude space (three dimensions) if it is spatial-temporal data. \textbf{Average Length} denotes the average number of data points in a single trajectory.

The model specification for synthetic data and specific information of real data are listed below:
\begin{table*}[!t]
\centering
\caption{ Synthetic and real dataset description. }
\label{tab:data}
\resizebox{\textwidth}{!}{
\begin{tabular}{c c c c  c c  c c  c c c}
\Xhline{2pt}
& \multicolumn{3}{c}{\textbf{Time Series}} & \multicolumn{2}{c}{\textbf{TPP}} & \multicolumn{2}{c}{\textbf{STPP}} & \multicolumn{2}{c}{\textbf{Real Data}} \\ \cline{2-11}
\textbf{Types}  & ARMA$_1$ & ARMA$_2$ & GARCH & SE & SC & STD & GAU & Earthquake & Weather \\ 
\textbf{Range}  & [0,50) & [0,50) & [0,50) & [0,100) & [0,100) & \multicolumn{2}{c}{Time: $[0,1)$, Space: $[0,1)\times[0,1)$} & Time: $[0,1)$, Space: $[0,1)\times[0,1)$ & [0,365) \\ 
\textbf{Average Length} & 500 & 500 & 500 & 500 & 500 & 500 & 500 & 400 & 365 \\
\textbf{\# of Trajectories} & 500 & 500 & 500 & 500 & 500 & 200 & 200 & 200 & 60\\
\Xhline{2pt}
\end{tabular}}
\label{tab:dataset_summary}
\end{table*}
\paragraph{Time Series Data}  For an ARMA(p,q) model with autoregressive (AR) coefficients $\phi_{arma} = (\phi_1, \ldots, \phi_p)$, moving average (MA) coefficients $\theta_{arma} = (\theta_1, \ldots, \theta_q)$, and variance of the white noice ($\sigma^2$), the model is given by:
\[x_i = \phi_1x_{i-1} + \phi_2x_{i-2}+\cdots + \phi_{p}x_{i-p} + \epsilon_i + \theta_1\epsilon_{i-1} + \cdots + \theta_q\epsilon_{i-q},\]
where $x_i$ is the value of the time series at index $i$ and $\epsilon_i$ is the white noise term at index $i$ with $i \sim N(0, \sigma^2)$.

ARMA models we experiment are specified as:
\begin{enumerate}
    \item $ARMA_1$: ARMA(2,1) with $\phi_{arma} =[0.5, 0.4]$, $\theta_{arma} = 0.65$ and $\sigma = 1$.
    \item $ARMA_2$: ARMA (2,2) with $\phi_{arma} =[0.5, -0.4], \theta_{arma} =[0.3, -0.2]$ and $\sigma = 1$.
\end{enumerate} 

Note that the neural ODE framework can be used to approximately model the ARMA time series. AR(1) model can be exactly modeled using our framework by letting $h_i = \phi(h_{i-1},x_{i-1})=x_{i-1}=h_{i-1}$ and $x_i=\alpha h_i+\epsilon_i$. For AR(2), $h_{i}=(x_{i-1},x_{i-2})$, thus $h_i=\phi(h_{i-1},x_{i-1})$ and $x_i=\phi^\top h_i + \epsilon$, \ie, the $g$ function is the linear function $\phi^\top h_i$. For ARMA models, the MA part involves past errors that make it complicated, but the neural ODE can still approximate how these errors propagate over time, thus implicitly capturing the time dependencies within the process.

The GARCH($p,q$) model measuring the volatility over a time range is defined with the following parameters: the mean of the time series is $\mu$, the constant term representing the baseline level of volatility is $\omega$, the ARCH terms capturing the impact of past shocks are $\alpha_{garch} = (\alpha_1, \ldots, \alpha_p)$, the GARCH terms capturing the persistence of volatility are $\beta_{garch} = (\beta_1, \ldots, \beta_q)$, the leverage terms capturing the asymmetry for negative shocks are $\gamma_{garch} = (\gamma_1, \ldots, \gamma_p)$, and the innovation term (shock) at time $i$ is $\eta_i$.
Given the parameters, GARCH$(p,q)$ is defined with mean and variance equations:
\begin{align*}
& \text{Mean Equation:}\quad x_i = \mu + \eta_i, \\
& \text{Variance Equation:}\quad\sigma_i^2 = \omega + \sum_{j=1}^{p} \alpha_j \eta_{i-j}^2 + \sum_{j=1}^{q} \beta_j \sigma_{i-j}^2 + \sum_{j=1}^{p} \gamma_l \eta_{i-j} \mathbbm{1}(\eta_{i-j} < 0),
\end{align*}
where $\eta_i = \epsilon_i \sigma_i$ represents the innovation with $\epsilon_i \sim N(0,1)$ and $\sigma_i$ being the conditional standard deviation at index $i$. In the experiment, the model GARCH(1,1) is defined with $\mu = 0.03, \omega = 0.04, \alpha_{garch} = 0.04, \gamma_{garch} = 0.02, \beta_{garch} = 0.9$, and $\eta = 1.0$.

\paragraph{TPP Data}\label{data:tpp} Sequence are  simulated using Ogata's thinning algorithm \citep{ogata1981lewis}. The models are set as follows:
\begin{enumerate}
    \item \emph{Self-Exciting process (SE)}: The occurrence of past events triggers future events. The conditional intensity function is given by $\lambda(t|\mathcal{H}_i) = \mu + \alpha\sum_{i:t_i < t}\exp\{-\beta(t-t_i)\}$, where $\mu = 1, \alpha = 1$, $\beta = 1.25$ and $\bar \lambda = 1e+2$.
    \item \emph{Self-Correcting Process (SC)}: The occurrence of events reduces the likelihood of future ones. The conditional intensity function is $\lambda(t) = \exp\left(\mu + \alpha t - \sum_{i:t_i < t} \beta \right)$, where $\mu = 2.5, \alpha = 0.05$, $\beta = 0.25$, and $\bar \lambda = 1e+2$.
\end{enumerate}
\paragraph{STPP Data} The standard diffusion kernel function is defined as:
\begin{equation*}
    h(t, {s}, h_i, {h_s}) = C\frac{\exp(-\beta\Delta t)}{2\pi\sigma_x\sigma_y\Delta t}\exp\left\{-\frac{1}{2\Delta t}\left(\frac{{(\Delta x)}^2}{\sigma^2_ x} + \frac{(\Delta y)^2}{\sigma_y^2}\right)\right\},
\end{equation*}
and the Gaussian diffusion kernel function is defined as:
\begin{multline*}
    h(t, {s}, h_i, {h_s}) =   
    \frac{C\exp(-\beta \Delta t)}{2 \pi \sigma_x \sigma_y \Delta t \sqrt{1 - \rho^2}}\times \\
    \exp\left(-\frac{ 
   (\Delta x - \mu_x)^2/\sigma_x^2 + (\Delta y - \mu_y)^2/\sigma_y^2 - 
    \left(2 \rho (x - \mu_x) ( \Delta y - \mu_y)\right)/\left(\sigma_x \sigma_y \right)}{2 \Delta t (1 - \rho^2)}  \right),
\end{multline*}
where $h_i$ is a historical time, ${h}_s$ is a historical location, $\Delta t = t - h_i$, $\Delta s = s - h_s$, $\Delta x = \Delta s_x$, and $\Delta y = \Delta s_y$. Models are defined with the following parameters:
\begin{enumerate}
    \item $STPP_{std}$: $\mu = 1$, $\sigma_x = \sigma_y = 0.5, \beta = 0.25, C = 1$, $\bar\lambda = 1e+4$.
    \item $STPP_{gau}$: $\mu = 2.5$, $\mu_x= \mu_y=0.1, \sigma_x=\sigma_y=1, \rho=0, \beta=1, C=1$, $\bar\lambda = 1e+4$.
\end{enumerate}

\begin{figure}[!t]
    \centering
        \includegraphics[width=0.6\linewidth]{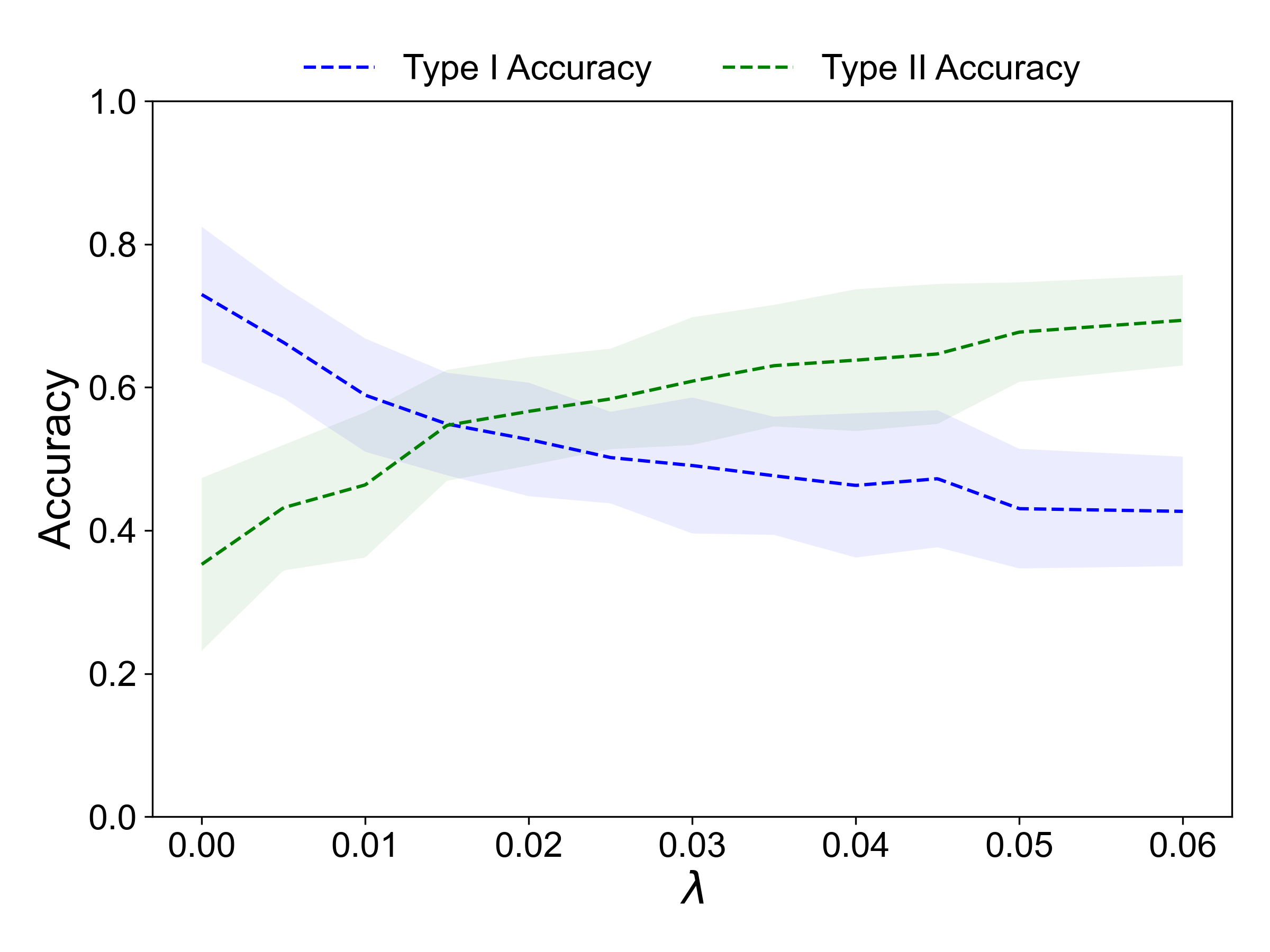}
    \caption{
    Ablation study of hyper-parameter $\lambda$ selection. }
    \label{fig:hyper}
\end{figure}

\subsection{Testing Procedure Details}\label{exp:testing-procedure}
\paragraph{Synthetic Data} For each dataset, we perform 500 iterations of testing between randomly selected pairs of a designated ground-truth model and a fitted model, where $P^\star$ represents the ground-truth model and $\widehat{P}$ represents the fitted model.
\paragraph{Real Data} 
 Observations are clipped randomly and normalized for each iteration of the experiment with an average length $400$. We evaluate the performance of goodness-of-fit test statistics for four cases: \textit{(i)} North California earthquake data sequences as the ground truth and Japan California earthquake data as the fitted model; {\it (ii)} the reverse configuration; {\it (iii)} North California data as both observe and fitted model; {\it (iv)} Japan data for both observe and fitted model. 200 iterations are performed for each scenario.
Similar to experiments on Earthquake dataset, we assess the performance of proposed test in each scenario for 200 iterations.

\paragraph{Ablation Study} This study examines the impact of hyperparameter choices on the testing performance of the NR statistic. We focus on the smoothing constraints $\lambda$ for discretization. We set the ground-truth model as SE and the fitted model as SC using the same experimental setup as the TPP synthetic experiment. 200 iterations are performed for each $e$ and $\lambda$. Now the dimension of the hidden states is $2$ and $e$ is 150.

\emph{Smoothing Constraints $\lambda$}: As $\lambda$ increases, we observe a general increase in Type II accuracies and a decrease in Type I accuracies. Type II accuracies begin around 0.35, increase to 0.5 by $\lambda = 0.01$, and continue to increase until $\lambda = 0.015$, where the trend of increase become slower until it stabilizes beyond $\lambda = 0.05$. Type II accuracies start near 0.75 for small $\lambda$, drop below 0.6 by $\lambda = 0.075$ and continue the decreasing trend until it becomes stable beyond $\lambda = 0.06$. Type I and II accuracies intersect at $\lambda = 0.015$, achieving the best balance between Type I and Type II Accuracy.

\end{document}